\def\eqref#1{equation~\ref{#1}}
\def\1{\bm{1}}
\DeclareMathAlphabet{\mathsfit}{\encodingdefault}{\sfdefault}{m}{sl}
\SetMathAlphabet{\mathsfit}{bold}{\encodingdefault}{\sfdefault}{bx}{n}
\theoremstyle{plain}
\newtheorem{theorem}{Theorem}[section]
\newtheorem{lemma}[theorem]{Lemma}
\newtheorem{corollary}[theorem]{Corollary}
\theoremstyle{definition}
\theoremstyle{remark}
\title{How Much is Unseen Depends Chiefly~on\\Information~About~the~Seen}
\author{Seongmin Lee and Marcel B{\"o}hme \\
MPI for Security and Privacy, Germany \\
\texttt{\{seongmin.lee,marcel.boehme\}@mpi-sp.org}
}
\newcommand{\iclr}[1]{{\color{black}#1}}
\newcommand{\E}[1]{\mathbb{E}\left[#1\right]}
\newcommand{\V}{\mathrm{Var}}
\newcommand{\Var}[1]{\V\left(#1\right)}
\newcommand{\Cov}[1]{\mathrm{Cov}\left(#1\right)}
\newcommand{\indicator}{\mathbf{1}}
\newcommand{\bmin}{\beta_{\min}}
\newcommand{\omax}{o_{\max}}
\newcommand{\uniform}{\textsf{\small{uniform}}\xspace}
\newcommand{\half}{\textsf{\small{half\&half}}\xspace}
\newcommand{\zipf}{\textsf{\small{zipf-1}}\xspace}
\newcommand{\zipfhalf}{\textsf{\small{zipf-0.5}}\xspace}
\newcommand{\diri}{\textsf{\small{diri-1}}\xspace}
\newcommand{\dirihalf}{\textsf{\small{diri-0.5}}\xspace}
\newcommand*\circled[1]{\tikz[baseline=(char.base)]{
            \node[shape=circle,fill,inner sep=0.5pt] (char) {\textcolor{white}{#1}};}}
\newenvironment{smashedalign}
            {\par$\!\aligned}
            {\endaligned$\par}
\begin{document}

\maketitle

\begin{abstract}
  The \emph{missing mass} refers to the proportion of data points in an \emph{unknown} population of classifier inputs that belong to classes \emph{not} present in the classifier's training data, which is assumed to be a random sample from that unknown population.
  We find that \emph{in expectation} the missing mass is entirely determined by the number $f_k$ of classes that \emph{do} appear in the training data the same number of times \emph{and an exponentially decaying error}.
  While this is the first precise characterization of the expected missing mass in terms of the sample, the induced estimator suffers from an impractically high variance. However, our theory suggests a large search space of nearly unbiased estimators that can be searched effectively and efficiently. Hence, we cast distribution-free estimation as an optimization problem to find a distribution-specific estimator with a minimized mean-squared error (MSE), given only the sample.
  In our experiments, our search algorithm discovers estimators that have a substantially smaller MSE than the state-of-the-art Good-Turing estimator. This holds for over \iclr{93\% of runs when there are at least as many samples as classes. Our estimators' MSE is roughly 80\% of the Good-Turing estimator's.}
\end{abstract}

\section{Introduction}

How can we extrapolate from properties of the training data to properties of the unseen, underlying distribution of the data? This is a fundamental question in machine learning \citep{science,Orlitsky:2015aa,painsky,Acharya:2013aa,Hao:2020aa}. The probability that a data point belongs to a class that does \emph{not} exist in the training data is also known as the \emph{missing probability mass} since \emph{empirically} the entire probability mass is distributed over classes that \emph{do} exist in the training data.
For instance, the missing mass measures how \emph{representative} the training data is of the unknown distribution. If the missing mass is high, the training is not very representative, and a trained classifier is unlikely to predict the correct class.
If we manually label training data, the missing mass also measures \emph{saturation}. We may decide that the labeling effort has been sufficient and saturation has been reached when the missing mass is below a certain threshold.

\subsection{Background}\label{sec:background}

Consider a \emph{multinomial distribution} $p = \langle p_1, \cdots p_S \rangle$ over a support set $\mathcal{X}$ where support size $S=|\mathcal{X}|$ and probability values are \emph{unknown}.
Let $X^n=\langle X_1, \cdots X_n \rangle$ be a set of independent and identically distributed random variables representing the sequence of elements observed in $n$ samples from $p$. Let $N_x$ be the number of times element $x\in \mathcal{X}$ is observed in the sample $X^n$. For $k:0\le k \le n$, let $\Phi_k$ be the number of elements appearing exactly $k$ times in $X^n$, i.e.,
$N_x = \sum_{i = 1}^n \indicator(X_i = x)$ and $\Phi_k = \sum_{x\in \mathcal{X}} \indicator(N_x = k)$.
Let $f_k(n)$ be the expected value of $\Phi_k$~\citep{good1953}, i.e.,
\begin{align}
  f_k(n)={n\choose k}\sum_{x \in \mathcal{X}}p_x^k(1-p_x)^{n-k}=\E{\Phi_k}
\end{align}

\textbf{Estimating rare/unobserved $p_x$}. We cannot expect all elements to exist in $X^n$.
While the empirical estimator $\hat p_x^{\text{Emp}} = N_x/n$ is generally unbiased, $\hat p_x^{\text{Emp}}$ distributes the entire probability mass only over the observed elements. This leaves a ``missing probability mass'' over the unobserved elements. In particular, $\hat p_x^\text{Emp}$ \emph{given that} $N_x>0$ overestimates $p_x$, i.e., for observed elements
\begin{align}
  \E{\left.\frac{N_x}{n} \ \right| \ N_x > 0} & = \frac{p_x}{1-(1-p_x)^n}.
\end{align}
We notice that the bias  increases as $p_x$ decreases. Bias is maximized for the rarest observed element.

\textbf{Missing mass,\,realizability,\,and\,natural estimation}.
Good and Turing (GT) \citep{good1953} discovered that the expected value of the probability $M_k=\sum_{x\in \mathcal{X}} p_x \indicator(N_x = k)$ that the $(n+1)$-th observation $X_{n+1}$ is an element that has been observed exactly $k$ times in $X^n$ (incl. $k=0$) is a function of the expected number of colors $f_{k+1}(n + 1)$ that will be observed $k+1$ times in an enlarged sample $X^n\cup X_{n+1}$, i.e, $\E{M_k}=\frac{k+1}{n+1}f_{k+1}(n+1)$.
We also call $M_k$ as \emph{total probability mass} over the elements that have been observed exactly $k$ times.
Since our sample $X^n$ is only of size $n$, GT suggested to estimate $M_k$ using $\Phi_{k+1}$. Concretely, $\hat M_k^G = \frac{k+1}{n}\Phi_{k+1}$.

For $k=0$, $M_{0}$ gives the \emph{``missing" (probability) mass} over the elements not in the sample. In genetics and biostatistics, the complement $1-M_0$ measures \emph{sample coverage}, i.e., the proportion of individuals in the population belonging to a species \emph{not} observed in the sample \citep{Chao:2012aa}. In the context of supervised machine learning, assuming the training data is a random sample, the sample coverage of the training data gives the proportion of all data (seen or unseen) with labels not observed in the training data.

A \emph{natural estimator} of $p_x$ assigns the same probability to all elements $x$ appearing the same number of times in the sample $X^n$~\citep{Orlitsky:2015aa}.
For $k>0$, $\hat p_x = M_k / \Phi_k$ gives the hypothetical
\emph{best natural estimator} of $p_x$ for every element $x$ that has been observed $k$ times.

\textbf{Bias of GT}. In terms of bias, \citet{gtbias} observe that the GT estimator $\hat M_k^G=\frac{k+1}{n}\Phi_{k+1}$ is an unbiased estimate of $M_k(X^{n-1})$, i.e., where the $n$-th sample was \emph{deleted} from $X^n$ and find:\vspace{-0.1cm}
\begin{align}
  \left|\E{\hat M_k^G -  M_k}\right| = \left|\E{M_k(X^{n-1}) - M_k(X^n)}\right| \le \frac{k+2}{n+1}=\mathcal{O}\left(\frac{1}{n}\right).
\end{align}

\textbf{Convergence/competitiveness of GT}.
\citet{mcallester} analyzed the \emph{convergence}, which is then improved by
\citet{drukh} and more recently by \citet{painsky}. They showed that, with high probability, $\hat M_k^G$ converges at a rate of $\mathcal{O}(1/\sqrt{n})$ for all $k$ based on worst-case mean squared error analysis.
Using the Poisson approximation, \citet{Orlitsky:2015aa} showed that natural estimators from GT, i.e., $\hat p_x^G = \hat M_{N_x}^G / \Phi_{N_x}$, performs close to the best natural estimator.
Regret, the metric of the competitiveness of an estimator against the best natural estimator, is measured as KL divergence between the estimate $\hat p$ and the actual distribution $p$, $D_{KL}(\hat p || p)$. Their study also showed that finding the best natural estimator for $p$ is same as finding the best estimator for $M=\{M_k\}_{k=0}^n$.

\textbf{Poisson approximation}. The Poisson approximation with parameter $\lambda_x = p_xn$ has often been used to tackle a major challenge in the formal analysis of the missing mass and natural estimators \citep{Orlitsky:2015aa,Orlitsky:2016aa,Acharya:2013aa,Efron:1976aa,Valiant:2016aa,good1953,Good:1956aa,Hao:2020aa}. The challenge is the \emph{dependencies between frequencies} $N_x$ for different elements $x\in \mathcal{X}$.
In this {Poisson Product model}, a continuous-time sampling scheme with $S=|\mathcal{X}|$ independent Poisson distributions is considered where the frequency $N_x$ of an element $x$ is represented as a Poisson random variable with mean $p_xn$. In other words, the frequencies $N_x$ are modelled as independent random variables.
Consequently, the GT estimator is unbiased in the Poisson Product model \citep{Orlitsky:2016aa}; yet, it is biased in the multinomial distribution \citep{gtbias}.
Hence, we tackle the dependencies between frequencies analytically, without approximation via the Poisson Product model.

\subsection{Contribution of the Paper}

In this paper, we reinforce the foundations of multinomial distribution estimation with a precise characterization of the \emph{dependencies} between $N_x=\sum_{i=1}^n\indicator(X_i=x)$ across different $x\in\mathcal{X}$ (rather than assuming independence
using the Poisson approximation). The theoretical analysis is based on the \emph{expected value} of the frequency of frequencies $\E{\Phi_k} = f_k(n)$ between different $k$ and $n$, which is
\begin{equation}
  \frac{f_k(n)}{{n \choose k}} = \frac{f_k(n +1)}{{n + 1 \choose k}} - \frac{f_{k + 1}(n + 1)}{{n + 1 \choose k + 1}}. \label{eq:dependency}
\end{equation}
Exploring this new theoretical tool, we bring two contributions to the estimation of the total probability mass $M_k$ for any $k: 0\le k \le n$. Firstly, we show \emph{exactly to what extent} $\E{M_k}$ can be estimated from the sample $X^n$ and \emph{how much remains} to be estimated from the underlying distribution $p$ and the number of elements $|\mathcal{X}|$.
Specifically, we show the following.
\begin{theorem}\label{thm:expected}
  \begin{align}
    \E{M_k} = {n \choose k}\left[\sum_{i = 1}^{n-k} (-1)^{i-1} f_{k + i}(n)\left/{n \choose k + i}\right.\right] + R_{n,k}
  \end{align}
  where $R_{n,k}={n \choose k}(-1)^{n - k} f_{n + 1}(n + 1)$ is the remainder.
\end{theorem}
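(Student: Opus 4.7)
The plan is to start from the direct expression $\E{M_k} = \sum_{x \in \mathcal{X}} p_x \Pr(N_x = k) = {n\choose k}\sum_{x \in \mathcal{X}} p_x^{k+1}(1-p_x)^{n-k}$, which follows because $N_x \sim \mathrm{Binomial}(n,p_x)$, and then telescope the inner sum into the claimed alternating series plus a single boundary term.

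To do this cleanly I would introduce the auxiliary quantity $A_j(m) = \sum_{x \in \mathcal{X}} p_x^j (1-p_x)^m$, so that $f_j(n) = {n \choose j}\,A_j(n-j)$ and $\E{M_k} = {n \choose k}\,A_{k+1}(n-k)$. Inserting the trivial factorization $1 = p_x + (1-p_x)$ into each summand of $A_j(m-1)$ and splitting yields the one-step recursion $A_j(m-1) = A_j(m) + A_{j+1}(m-1)$, equivalently $A_j(m) = A_j(m-1) - A_{j+1}(m-1)$. This is essentially the same discrete identity underlying equation~\ref{eq:dependency}.

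Next I would apply this identity to $A_{k+1}(n-k)$ and then recursively to the residual off-diagonal term $A_{k+i+1}(n-k-i)$ produced at each step. A straightforward induction on $j$ establishes
\begin{equation*}
A_{k+1}(n-k) \;=\; \sum_{i=1}^{j}(-1)^{i-1}\frac{f_{k+i}(n)}{{n \choose k+i}} \;+\; (-1)^{j}\, A_{k+j+1}(n-k-j), \qquad 0 \le j \le n-k.
\end{equation*}
Setting $j = n-k$ collapses the residual to $A_{n+1}(0) = \sum_{x \in \mathcal{X}} p_x^{n+1} = f_{n+1}(n+1)$; multiplying both sides by ${n \choose k}$ then gives the theorem with $R_{n,k} = {n\choose k}(-1)^{n-k}f_{n+1}(n+1)$.

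The only delicate part is the index bookkeeping. Each application of the recursion replaces an $A_j(m)$ with $j+m = n+1$ by one term whose indices sum to $n$ (precisely a scaled $f_{k+i}(n)$) plus one new term whose indices again sum to $n+1$, so the expansion terminates cleanly only when the off-diagonal residual has second argument zero. That forces the stopping index $j = n-k$ and identifies the remainder with $f_{n+1}(n+1)$ rather than any $f_{?}(n)$ term. Once this structural alignment is noticed, the induction step is routine, and the edge case $k = n$ (empty sum, $\E{M_n} = f_{n+1}(n+1)$) checks out directly.
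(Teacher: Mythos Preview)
Your proof is correct and is essentially the same argument the paper gives in Section~\ref{sec:dependency}: the paper works with $g_k(n)=f_k(n)/\binom{n}{k}=\sum_x p_x^{k}(1-p_x)^{n-k}$, derives the one-step identity $g_k(n+1)=g_k(n)-g_{k+1}(n+1)$, and telescopes to obtain~(\ref{eq:mk}). Your auxiliary $A_j(m)$ is just a reparametrization of $g$ via $g_k(n)=A_k(n-k)$, and your recursion $A_j(m)=A_j(m-1)-A_{j+1}(m-1)$ is the same identity, so the induction, the stopping index $j=n-k$, and the identification of the remainder with $f_{n+1}(n+1)$ all match the paper line for line.
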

This decomposition shows that the GT estimator is the \emph{first term} of $\E{M_k}$ using the plug-in estimator $\Phi_1$ for $f_1(n)$. Hence, it gives the \emph{exact bias} of the GT estimator in the multinomial setting (which would incorrectly be identified as \emph{unbiased} using the Poisson approximation). We discuss bias and variance for the estimator $\hat M_k^B={n \choose k}\left[\sum_{i = 1}^{n-k} (-1)^{i-1} \Phi_{k + i}\left/{n \choose k + i}\right.\right]$ that is induced by Theorem~\ref{thm:expected}.

Secondly, using our new theory, we cast the distribution-free estimation of $M_k$ as a search problem whose goal it is to find a distribution-specific estimator with a minimized MSE. Using the relationship in Eqn.~\ref{eq:dependency} in Theorem~\ref{thm:expected}, we notice many \emph{representations} of $\E{M_k}$, all of which suggest different estimators for $\E{M_k}$.
We introduce a deterministic method to construct a unique estimator from a representation, and show how to estimate the mean squared error (MSE) for such an estimator.
Equipped with a large \emph{search space} of representations and a \emph{fitness function} to estimate the MSE of a candidate estimator, we can finally define our distribution-free estimation methodology.

We compare the performance of the minimal-bias estimator $\hat M_k^B$ and the minimal-MSE estimators discovered by our genetic algorithm to the that of the widely used GT estimator on a variety of multinomial distributions used for evaluation in previous work. Our results show that 1) the minimal-bias estimator has a substantially smaller bias than the GT estimator by thousands of order of magnitude, 2) Our genetic algorithm can produce estimators with MSE smaller than the GT estimator over \iclr{93\% of the time when there are at least as many samples as classes; their MSE is roughly 80\% of the GT estimator.}
We also publish all data and scripts to reproduce our results.

\section{Dependencies Between Frequencies $N_x$}
\label{sec:dependency}
We propose a new, distribution-free\footnote{A \emph{distribution-free analysis} is free of assumptions about the shape of the probability distribution generating the sample. In this case, we make no assumptions about parameters $p$ or $n$.} methodology for reasoning about properties of estimators of the missing and total probability masses for multinomial distributions.
The \emph{main challenge} for the statistical analysis of $M_k$ has been reasoning in the presence of dependencies between frequencies $N_x$ for different elements $x\in \mathcal{X}$. As discussed in Section~\ref{sec:background}, a Poisson approximation with parameter $\lambda_x=p_xn$ is often used to render these frequencies as independent \citep{Orlitsky:2015aa,Orlitsky:2016aa,Acharya:2013aa,Efron:1976aa,Valiant:2016aa,good1953,Good:1956aa,Hao:2020aa}.
In the following, we tackle this challenge by formalizing these dependencies between frequencies. Thus, we establish a link between the expected values of the corresponding total probability masses.

\subsection{Dependency Among Frequencies}
Recall that the expected value $f_k(n)$ of the number of elements $\Phi_k$ that appear exactly $k$ times in the sample $X^n$ is defined as $f_k(n) = \sum_{x \in \mathcal{X}} \binom{n}{k} p_x^k (1 - p_x)^{n - k}$.
For convencience, let $g_k(n)=f_k(n)/{n \choose k}$. We notice the following relationship among $k$ and $n$:
\begin{align}
  g_k(n+1)
   & = \sum_{x = 1}^S  p_x^k (1 - p_x)^{n - k} \cdot (1 - p_x) = g_k(n) - g_{k+1}(n+1)\label{eq:gk} \\
   & = \sum_{i = 0}^{n - k} (-1)^{i} g_{k + i}(n) + (-1)^{n -k + 1} g_{n + 1}(n + 1)
\end{align}

We can now write the expected value $\E{M_k}$ of the total probability mass in terms of the frequencies with which different elements $x\in \mathcal{X}$ have been observed in the sample $X^n$ of size $n$ as follows
  {\small\begin{align}
      \E{M_k} & = \sum_{x\in\mathcal{X}} \binom{n}{k} p_x^{k + 1} (1 - p_x)^{n - k} = {n \choose k}g_{k+1}(n+1) =  {n \choose k}\left[\sum_{i = 1}^{n-k} (-1)^{i-1} g_{k + i}(n)\right] + R_{n,k}\label{eq:mk}
    \end{align}}
where $R_{n,k}={n \choose k}(-1)^{n - k} f_{n + 1}(n + 1)$ is a remainder term. \textbf{This demonstrates Theorem \ref{thm:expected}.}

\begin{wrapfigure}{r}{0.38\textwidth}
  \centering
  \includegraphics[width=.38\textwidth,trim={13.5cm 1.5cm 15cm 1.0cm},clip]{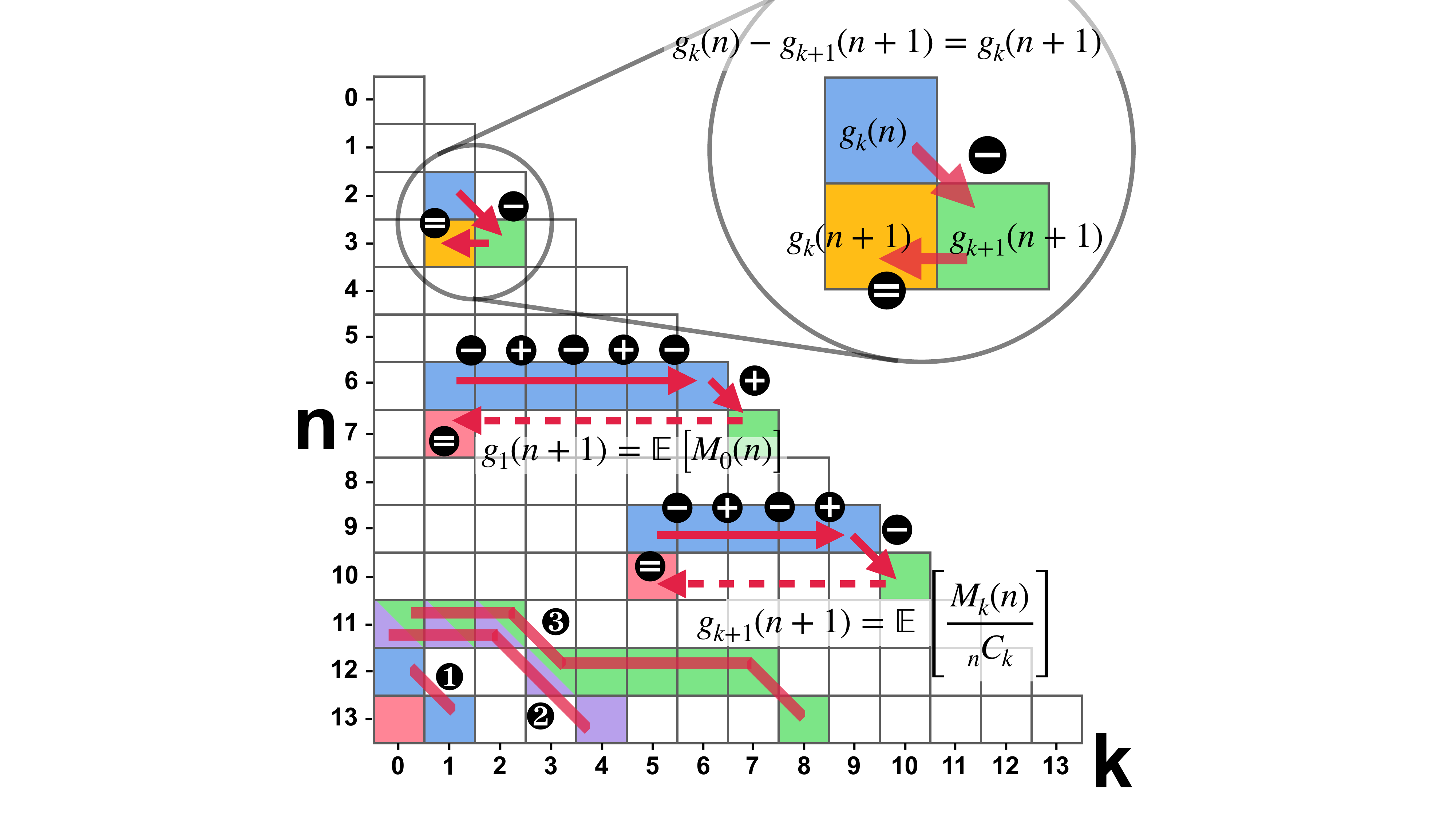}\\[-0.0cm]
  \caption{$g_k(n)$ lower triangle matrix}
  \label{fig:relationship}
\end{wrapfigure}

Figure~\ref{fig:relationship} illustrates the relationship between the expected frequency of frequencies $f_k(n)=g_k(n)/{n\choose k}$, the frequency $k$, and the sample size $n$.
The y- and x-axis represents the sample size $n$ and the frequency $k$, respectively. As per Eqn.~(\ref{eq:gk}), for every 2$\times$2 lower triangle matrix, the value of the lower left cell ($g_k(n + 1)$) is value of the upper left cell ($g_k(n)$) minus the value of the lower right cell ($g_{k + 1}(n + 1)$).

We can use this visualization to quickly see how to rewrite $g_k(n)$ as an alternating sum of values of the cells in the upper row, starting from the cell in the same column to the rightmost cell, and adding/subtracting the value of the rightmost cell in the current row.
For instance, the $g_0(13)$ in the bottom-leftmost red cell in Figure~\ref{fig:relationship} is equivalent to the various linear combinations of its surrounding cells: \circled{1} with $g_0(12)$ and $g_1(13)$ (blue colored), \circled{2} with $g_0(11)$, $g_1(11)$, $\cdots$, $g_4(13)$ (purple colored), or \circled{3} with $g_0(11)$, $g_1(11)$, $\cdots$, $g_{8}(13)$ (green colored).

\textbf{Missing Mass}. The missing probability mass $M_0$ gives the proportion of \emph{all possible observations} for which the elements $x\in \mathcal{X}$ have \emph{not} been observed in $X^n$.
The expected value of $M_0$ is
\begin{align}
  \E{M_0} = g_1(n + 1) =\left[\sum_{k=1}^n(-1)^{k-1}g_k(n)\right]+(-1)^{n}f_{n+1}(n+1)
\end{align}
by Eqn.~(\ref{eq:mk}). The values in the second column of Figure~\ref{fig:relationship} ($k = 1$) represents the expected values of missing mass; $\E{M_0}$ being the cumulative sum of $(-1)^{k-1} g_k(n)$ is intuitively clear from the figure (the red cell in the row $n = 7$).
It is here where we observe that $\E{M_0} = g_1(n + 1)$ is \emph{almost entirely determined} by the $g_*(n)$, the expected frequencies of frequencies in the sample $X^n$, and \emph{not} by the number of elements $|\mathcal{X}|$ or their underlying distribution $p$. In fact, the influence of $p$ in the remainder term decays exponentially, i.e., $f_{n + 1}(n + 1) =\sum_{x\in\mathcal{X}}p_x^{n + 1}\le\sum_{x\in\mathcal{X}}\left(e^{1 - p_x}\right)^{-n - 1}$ which is dominated by the discovery probability of the most abundant element $\max(p)$.

\textbf{Total Mass}. Similarly, the expected value of the total probability mass $\E{M_k}$ (the red cell in the row $n=10$), which is equal to ${n \choose k} g_{k + 1}(n + 1)$, is almost entirely determined by the expected frequencies of the sample $X^n$ with remainder $R_{n,k}={n \choose k}\sum_{x\in\mathcal{X}}p_x^{n+1}$.

\section{A Large Class of Estimators}
From the representation of $\E{M_k}$ in terms of frequencies in Eqn.~(\ref{eq:mk}) and the relationship across frequencies in Eqn.~(\ref{eq:gk}), we can see that there is a large number of representations of the expected total probability mass $\E{M_k}$. Each representation might suggest different estimators.

\subsection{Estimator with Exponentially Decaying Bias}\label{sec:estimator}

We start by defining the minimal bias estimator $\hat M_k^B$ from the representation in Eqn.~(\ref{eq:mk}) and explore its properties.
Let
\begin{equation}
  \label{equ:better-turing-missing-mass}
  \hat M_k^B = -{n \choose k}\sum_{i = 1}^{n-k} \frac{(-1)^{i} \Phi_{k+i}}{{n \choose k+i}}
\end{equation}

\textbf{Bias}. For some constant $k: 0\le k\le n$ and some constant $\mathbf{c}>1$, the bias of $\hat M_k^B$ is in the order of $\mathcal{O}(n^k\mathbf{c}^{-n})$, i.e.,
$\left|\text{\emph{Bias}}_B\right|=\left|\E{\hat M_k^B - M_k}\right| = R_{n,k} = {n \choose k}\sum_{x\in\mathcal{X}}p_x^{n+1}
  \le {n \choose k} \sum_{x\in \mathcal{X}} \mathbf{c}_x^{-n} \le n^k \sum_{x\in \mathcal{X}} \mathbf{c}_x^{-n},$
where $\mathbf{c}_x > 1$ for all $x\in \mathcal{X}$ are constants.

\textbf{Variance}. The variance of $\hat M_k^B$ is given by the variances and covariances of the $\Phi_{k+i}$ for $i\in[1..n-k]$. Under the certain conditions, the variance of $\hat M_k^B$ also decays exponentially in $n$.
\begin{theorem}
  \label{thm:bt-var}
  $\Var{\hat M_k^B}$ decreases exponentially with $n$ if $p_{\max} < 0.5$ or $\frac{(1 - p_{\max})(1 - p_{\min})}{p_{\max}} < 1$, where $p_{\max}=\max_{x\in \mathcal{X}}p_x$ and $p_{\min}=\min_{x\in \mathcal{X}} p_x$. The proof is postponed to Appendix~\ref{sec:app:variance}.
\end{theorem}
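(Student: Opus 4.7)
The plan is to bound $\Var{\hat M_k^B}$ by first rewriting $\hat M_k^B$ as a sum over elements of $\mathcal{X}$ and then treating diagonal and off-diagonal contributions separately. Since $\Phi_{k+i}=\sum_{x\in\mathcal{X}}\indicator(N_x=k+i)$, swapping the summations in~(\ref{equ:better-turing-missing-mass}) gives $\hat M_k^B = \sum_{x\in\mathcal{X}}\xi_x$ with
\[
\xi_x \;:=\; -\frac{\binom{n}{k}\,(-1)^{N_x-k}}{\binom{n}{N_x}}\indicator(N_x>k),
\]
so $\xi_x$ is a function of $N_x$ alone (at most one value of $i$ is compatible with any realization of $N_x$). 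Hence $\Var{\hat M_k^B}=\sum_{x}\Var{\xi_x}+\sum_{x\neq y}\Cov{\xi_x,\xi_y}$, which we bound piece by piece.

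For the diagonal terms, using the marginal $N_x\sim\mathrm{Bin}(n,p_x)$ yields
\[
\E{\xi_x^2}\;=\;\binom{n}{k}^{2}\, p_x^k(1-p_x)^{n-k}\sum_{i=1}^{n-k}\frac{q_x^i}{\binom{n}{k+i}},\qquad q_x:=\frac{p_x}{1-p_x}.
\]
The bound $1/\binom{n}{k+i}\le 1$ reduces the inner sum to a truncated geometric series; under the first sufficient condition $p_{\max}<1/2$ we have $q_x<1$, so the sum is at most $q_x/(1-q_x)$. The prefactor $(1-p_x)^{n-k}$ then supplies exponential decay in $n$, and summing over $x$ while using $\sum_x p_x=1$ bounds $\sum_x\Var{\xi_x}$ by a quantity exponentially small in $n$.

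For the off-diagonal terms, substitute the trinomial joint law
\[
P(N_x=k+i,\,N_y=k+j)\;=\;\binom{n}{k+i,\,k+j,\,n-2k-i-j}\, p_x^{k+i}\,p_y^{k+j}\,(1-p_x-p_y)^{n-2k-i-j}
\]
into $\E{\xi_x\xi_y}$, factor out $p_x^k p_y^k(1-p_x-p_y)^{n-2k}$, and simplify the resulting alternating double sum $\sum_{i,j}(-1)^{i+j}(\cdots)$. The factor $(1-p_x-p_y)^{n-2k}$ provides the base exponential decay; keeping the residual double sum bounded uniformly in $n$ is where the second sufficient condition enters, since a natural uniform bound on the relevant geometric ratio over $x,y\in\mathcal{X}$ is $(1-p_{\max})(1-p_{\min})/p_{\max}$, and requiring this to be $<1$ keeps the sum finite as $n\to\infty$.

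The main obstacle is the off-diagonal analysis: the trinomial coupling through $(1-p_x-p_y)^{n-2k-i-j}$ together with the alternating double sum must be tracked jointly, not factor-by-factor, because a crude bound would leave a polynomial-in-$n$ residue that defeats exponential decay. The two sufficient conditions in the statement correspond to two natural ways of closing this argument: $p_{\max}<1/2$ lets the per-element geometric ratio $q_x$ that already controls the diagonal absorb the covariance as well, whereas $(1-p_{\max})(1-p_{\min})/p_{\max}<1$ arises from bounding the covariance by simultaneously tracking $(1-p_x)$, $(1-p_y)$ and $(1-p_x-p_y)$. Once each of $\sum_x\Var{\xi_x}$ and $\sum_{x\neq y}\Cov{\xi_x,\xi_y}$ is shown to decay exponentially in $n$ under the respective condition, the theorem follows.
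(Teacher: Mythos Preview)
Your decomposition by element, $\hat M_k^B=\sum_{x}\xi_x$ with $\xi_x$ a function of $N_x$ alone, is a legitimate starting point, and your diagonal computation of $\E{\xi_x^2}$ is correct. But the route you propose is both harder than necessary and, as written, does not close.

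The paper's argument is much cruder and avoids the off-diagonal analysis entirely. It stays at the level of the $\Phi_{k+i}$'s and uses the elementary inequality $\Var{\sum_{i=1}^{M}Y_i}\le M\sum_{i=1}^{M}\Var{Y_i}$ (which follows from $|\Cov{Y_i,Y_j}|\le\tfrac12(\Var{Y_i}+\Var{Y_j})$) with $Y_i=\Phi_{k+i}/\binom{n}{k+i}$ and $M=n-k$. Each $\Var{\Phi_i}$ is then bounded by $Sf_i(n)$, and the single-line estimate $g_i(n)\le S\,(1-p_{\min})^{n}\,\bigl(p_{\max}/(1-p_{\max})\bigr)^{i}$ does all the remaining work. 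The two sufficient conditions are simply the two cases $o_{\max}=p_{\max}/(1-p_{\max})<1$ versus $o_{\max}\ge 1$ of this one bound; no separate covariance analysis is needed, and the extra factor $n-k$ from Cauchy--Schwarz is harmlessly absorbed into the polynomial prefactor.

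Your plan, by contrast, has two concrete gaps. First, the diagonal bound you give relies on $q_x<1$, i.e.\ on the \emph{first} condition; when only the second condition holds (so $p_{\max}\ge 1/2$ and $q_{\max}\ge 1$), your truncated geometric series is not summable, and you have not said how the diagonal is controlled in that regime. Second, for the off-diagonal term you only assert that ``a natural uniform bound on the relevant geometric ratio'' is $(1-p_{\max})(1-p_{\min})/p_{\max}$, without deriving it; the ratios that actually appear after your factorisation are $p_x/(1-p_x-p_y)$ and $p_y/(1-p_x-p_y)$, together with a multinomial coefficient divided by two binomials, and you have not shown how these collapse to the stated quantity or why the residual double sum stays bounded in $n$. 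You yourself flag this as ``the main obstacle'' but do not carry it out. The paper's Cauchy--Schwarz shortcut is precisely what lets one bypass this obstacle.
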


\textbf{Comparison to Good-Turing} (GT).
The bias of $\hat M_k^B$ not only decays exponentially in $n$ but is also \emph{smaller} than that of GT estimator $\hat M_k^G$ by an exponential factor.
For a simpler variant of GT estimator, $\hat M_k^{G'}=\frac{k+1}{n-k}\Phi_{k+1}$ (suggested in~\citet{mcallester}), which corresponds to the first term in the expected total probability mass $\E{M_k}$ in Eqn.~(\ref{eq:mk}), we show that its bias is \emph{larger by an exponential factor} than the absolute bias of $\hat M_k^B$. To see this, we provide bounds on the individual sums and then on the bias ratio:

\minipage{0.55\textwidth}
{\footnotesize\begin{align}
     & \E{\hat M_k^{G'} - M_k}                            \ge {n \choose k} p_{\min}^{k + 2}(1-p_{\min})^{n-k-1} \label{eq:eighteen}
    \\
     & \left|\E{\hat M_k^B - M_k}\right|                   \le {n \choose k} S p_{\max}^{n+1},
    \label{eq:nineteen}
  \end{align}}
\endminipage
\hfill
\minipage{0.35\textwidth}
{\footnotesize\begin{align*}
    \therefore \left|\frac{\text{\emph{Bias}}_{G'}}{\text{\emph{Bias}}_B}\right| & \ge \frac{p^{k + 2}_{\min}}{Sp^{k + 2}_{\max}}\left(\frac{1-p_{\min}}{p_{\max}}\right)^{n-k-1}
  \end{align*}}
\endminipage
, where $S=|\mathcal{X}|$. Noticing that $(1-p_{\min})/p_{\max}>1$ for all distributions over $\mathcal{X}$, except where $S=2$ and $p=\{0.5,0.5\}$, the ratio decays exponentially in $n$ for $k \ll n$.
The same can be shown for the original GT estimator $\hat M_k^G=\frac{k+1}{n}\Phi_{k+1}$ for a sufficiently large sample size (see Appendix~\ref{sec:app:bias-comparison}). For instance, the missing mass $M_0$ for the uniform distribution is overestimated by $\hat M_0^G$ on the average by $(S-1)^{n-1}/S^n$ while $\hat M_0^B$ has a bias of $(-1)^n/S^n$, which is lower by a factor of $1/(S-1)^{n-1}$.

While the bias of our estimator $\hat M_k^B$ is lower than that of $\hat M_0^G$ by an exponential factor, the variance is higher. The variance of $\hat M_k^B$ depends on the variances of and covariances between $\Phi_{k + i}$s:
\begin{align}
  \label{eq:bt-var}
  \Var{\hat M_k^B}
  = \sum_{i = 1}^{n -k} c_i^2 \Var{\Phi_{k + i}} + \sum_{i \neq j} (-1)^{i + j} c_i c_j \Cov{\Phi_{k + i}, \Phi_{k + j}},
\end{align}
where $c_i = \left.\binom{n}{k}\right/\binom{n}{k + i}$. In contrast, the variance of $\hat M_k^{G}$ depends only on the variance of $\Phi_{k + 1}$.
Later, we empirically investigate the bias and the variance of the two estimators.

\subsection{Estimation with Minimal MSE as Search Problem}
\label{sec:optimization}

There are many representations of $\E{M_k}={n \choose k}g_{k+1}(n+1)$ that can be constructed by recursively rewriting terms according to the dependency among frequencies we identified (cf. Eqn.~(\ref{eq:gk} \& \ref{eq:mk})). The representation used to construct our minimal-bias estimator $\hat M_k^B$ was one of them. However, we notice that the variance of $\hat M_k^B$ is too high to be practical.

To find a representation from which an estimator with a minimal mean squared error (MSE) can be derived, we cast the estimation of $M_k$ as an \emph{optimization problem}. We first define the \emph{search space} of representations of $\E{M_k}$ and the \emph{fitness function} to estimate the MSE of a candidate estimator.

\begin{table}\footnotesize
  \centering
  \caption{$\E{M_k}$ preserving identites and example representations.\label{tab:representations}}
  \resizebox{\textwidth}{!}{%
    \begin{tabular}{p{.28\textwidth}p{.28\textwidth}p{.33\textwidth}} \toprule
      \multicolumn{1}{c}{\textbf{$\E{M_k}$ preserving identites}}
                                                     & \multicolumn{1}{c}{\textbf{Initial representation $r_0$}}
                                                     & \multicolumn{1}{c}{\textbf{Example representation $r_1$}} \\ \midrule
      \multicolumn{1}{c}{\tiny\begin{smashedalign}
                                  g_i(j) \nonumber
                                  &=_{(1)}((1-\delta) g_i(j) + \delta g_i(j)) \\
                                  &=_{(2)}(g_{i}(j + 1) + g_{i + 1}(j + 1))\\
                                  &=_{(3)}(g_{i}(j - 1) - g_{i + 1}(j)) \\
                                  &=_{(4)}(g_{i - 1}(j - 1) + g_{i - 1}(j))
                                \end{smashedalign}} &
      \multicolumn{1}{c}{\tiny\begin{smashedalign}
                                  \alpha_{i,j} \label{eq:initial}
                                  &= \begin{cases}\binom{n}{k} &\text{for $i=k + 1$ and $j={n+1}$}\\0 &\text{otherwise.}\hfill\end{cases} \nonumber
                                \end{smashedalign}}   &
      \multicolumn{1}{c}{\tiny\begin{smashedalign}
                                  \alpha_{i,j} \label{eq:r1}
                                  &= \begin{cases}
            \left.\binom{n}{k}\right/2  & \text{for $i=k + 1$ and $j={n+1}$} \\
            \left.\binom{n}{k}\right/2  & \text{for $i=k + 1$ and $j={n}$}   \\
            \left.-\binom{n}{k}\right/2 & \text{for $i=k + 2$ and $j={n+1}$} \\
            0                           & \text{otherwise.}\hfill
          \end{cases} \nonumber
                                \end{smashedalign}}                                             \\ \bottomrule
    \end{tabular}}
  \vspace{-0.5cm}
\end{table}

\textbf{Search space}.
Let $\E{M_k}$ be \emph{represented} by a suitable choice of coefficients $\{\alpha_{i,j}\}$ such that
$\E{M_k} = \sum_{i=1}^{n+1}\sum_{j=i}^{n+1} \alpha_{i,j} g_{i}(j)$; the search space of our optimization problem is the set of all possible representations of $\E{M_k}$.
One representation of $\E{M_k}={n \choose k}g_{k+1}(n+1)$ is $r_0$ in Table~\ref{tab:representations}.

Different encodings can be used to express the search space. In a Markov chain encoding, we realize that the four identities (1-4) in the first column of Table~\ref{tab:representations} can be recursively applied to the initial representation $r_0$ in the second column of Table~\ref{tab:representations} to explore the search space.
For instance, applying identity (1) with $\delta=0.5$ and identity (3) to $r_0$, we obtain the representation $r_1$ in Table~\ref{tab:representations}.
In a continuous optimization problem encoding, we realize that the coefficients $\alpha_{i,j}$ must satisfy the following constraints: for $\forall 1 \le k' \le n+1$,
\begin{equation}
  \sum_{i=1}^{n+1}\sum_{j=i}^{n+1} c_{i, j} \alpha_{i,j} = \begin{cases}
    {n \choose k} & \text{if $k' = k + 1$,} \\
    0             & \text{otherwise,}
  \end{cases} \text{ where } c_{i,j} = \begin{cases}
    \binom{n + 1 - j}{k' - i} & \text{if $0 \le k' - i \le n + 1 - j$,} \\
    0                         & \text{otherwise,}
  \end{cases}
  \label{eq:constraint}
\end{equation}
and any choice of $\alpha_{i,j}$ that satisfy Eqn.~(\ref{eq:constraint}) is in the search space. Appendix~\ref{sec:app:constraints} provides proof of how each encoding expresses the search space.

\textbf{Estimator instantiation}. To construct a unique estimator $\hat M_k^r$ of $M_k$ from a representation $r$ of $\E{M_k}$, we propose a deterministic method. But first, we define our random variables on subsamples of $X^n$. For any $m\le n$, let $N_x(m)$ be the number of times element $x\in \mathcal{X}$ is observed in the subsample $X^m=\langle X_1, \cdots X_m\rangle$ of $X^n$. Let $\Phi_k(m)$ be the number of elements appearing exactly $k$ times in $X^m$, i.e., $N_x(m) = \sum_{i = 1}^m \indicator(X_i = x)$, $\Phi_k(m) = \sum_{x\in \mathcal{X}} \indicator(N_x(m) = k)$. Note that $N_x = N_x(n)$ and $\Phi_k = \Phi_k(n)$.
Hence, given a representation $r$, we can construct $\hat M_k^r$ as
\begin{align}
  \hat M_k^r = \left[\sum_{i=1}^n\sum_{j=i}^n \frac{\alpha_{i,j}}{{j \choose i}}\Phi_{i}(j)\right] + \left[\sum_{i=1}^n \frac{\alpha_{i,n+1}}{{n+1 \choose i}}\Phi_i\right]
\end{align}
Notice that $\Phi_i(j) \left/ {j \choose i}\right.$ is just the plug-in estimator for $g_i(j)$.

\textbf{Fitness function}. To define the quantity to optimize, any optimization problem requires a fitness (objective) function. Our \emph{fitness function} takes a candidate representation $r$ and returns an estimate of the MSE of the corresponding estimator $\hat M_k^r$.
\iclr{
  We decompose the MSE as the sum of the squared bias, variances of, and the covariance between $M_k$ and $\hat M_k^r$. For convenience, let $f_{n+1}(n)=0$.
    {\small
      \begin{align}
        \begin{split}
          \text{MSE}(\hat M_k^r)
          &= \left[\sum_{1 \le i \le j \le n} \beta_{i,j}^2 f_i(j) - \binom{n}{k}g_{k + 1}(n + 1)\right]^2
          + \sum_{\substack{1 \le i \le j \le n \\ 1 \le l \le m \le n}} \beta_{i,j}\beta_{l,m} \Cov{\Phi_i(j),\Phi_l(m)} \\
          & + \Var{M_k} - 2 \sum_{1 \le i \le j \le n} \beta_{i,j} \Cov{\Phi_i(j),M_k},
        \end{split} \label{eq:mse}
      \end{align}
    }
}
where $\beta_{i,j}$ is the coefficient of $\Phi_i(j)$ in $\hat M_k^r$.
We expand on the MSE computation in Appendix~\ref{sec:app:variance-evolved}.
\iclr{The resulting estimator $\hat M_k^r$ minimizing the fitness function provides a estimator with a minimal MSE regarding the arbitrary sample of size $n$ from the underlying distribution.}

Since the underlying distribution $\{p_x\}_{x\in\mathcal{X}}$ is \emph{unknown}, we can only \emph{estimate} the MSE. For any element $x$ that has been observed exactly $k>0$ time in the sample $X^n$, we use $\hat p_x=\hat M_k^G/\Phi_k$ as natural estimator of $p_x$, where $\hat M_k^G$ is the GT estimator.
To handle unobserved elements ($k=0$), we first estimate the number of unseen elements $\E{\Phi_0} = f_0(n)$ using Chao's nonparamteric species richness estimator $\hat f_0 =\frac{n - 1}{n} \frac{\Phi_1^2}{2 \Phi_2}$~\citep{Chao:1984aa}, and then estimate the probability of each such unseen element as $\hat p_y = \hat M_0^G/\hat f_0$, where $\hat M_0^G$ is the GT estimator. Finally, we plug these estimates into Eqn.~(\ref{eq:mse}) to estimate the MSE.
It is interesting to note that it is precisely the GT estimator whose MSE our approach is supposed to improve upon.

\textbf{Optimization algorithm}.
With the required concepts in place, different optimization algorithms can be used to find the representation of $\E{M_k}$ that minimizes the MSE of the estimator. The performance guarantee for the discovered estimator inherently depends on the optimization algorithm.

We develop a genetic algorithm (GA)~\citep{Mitchell:1998aa} for the optimization problem encoded with the Markov chain encoding. Appendix~\ref{sec:app:evolutionary-algorithm} provides the GA in detail. Here, we briefly sketch the general procedure.
Starting from the \emph{initial representation} $r_0$ in Table~\ref{tab:representations}, our GA iteratively improves a population of candidate representations $P_g$, called \emph{individuals}. For every generation $g$, our GA selects the $m$ fittest individuals from the previous generation $P_{g-1}$, mutates them by randomly applying the $\E{M_k}$ preserving identites (1-4) in Table~\ref{tab:representations}, and creates the current generation $P_g$ by adding the Top-$n$ individuals from the previous generation, i.e., the \emph{elitism} strategy that guarantees that the best individuals are not lost.
The GA repeats this process for the iteration limit $G_L$ and outputs the best evolved estimator $\hat M_k^\text{Evo}$. Later, we empirically evaluate the performance of the discovered estimator $\hat M_k^\text{Evo}$ against the GT estimator $\hat M_k^G$.

Another approach one could use is a quadratic programming~\citep{Nocedal:2006aa} on the continuous optimization problem encoding.
The fitness function, Eqn~(\ref{eq:mse}), is quadratic in the coefficients $\beta_{i,j}$, and the constraints, Eqn~(\ref{eq:constraint}), are linear. Also, the search space is convex as the quadratic term in the fitness function (i.e., the sum of the variance and the square of the expected value of the estimator) is non-negative. By \citet{Ye:1989aa}, quadratic programming finds the optimal solution in a number of iterations that is polynomial in the sample size $n$. However, we notice that the fitness evaluation is prohibitively high, in the order of $\mathcal{O}(n^4)$ due to the number of covariance terms in Eqn~(\ref{eq:mse}), which makes using quadratic programming less practical for large $n$. Unlike quadratic programming, we found our GA is more practical for large $n$ as only the variance and covariance terms that are needed to compute the fitness function are computed.

\iclr{
  \textbf{Adapting to a larger sample}. The estimator $\hat M_k^\text{Evo}$ discovered by the optimization for samples of size $n$ can be converted to an estimator for a sample of arbitrary size $m \ge n$.
  Notice that the representation of $\E{M_k}$ of the sample of size $n$ can be converted to a representation of $\E{M'_k}$ of a larger sample of size $m$ by multiplying the coefficients $\alpha_{i,j}$ by $\binom{m}{k} / \binom{n}{k}$. Especially for the missing mass ($k = 0$), the representation is valid for any sample size $m\ge n$ without modification. This property can be used to easily derive the missing mass estimator for a larger sample size $m > n$.
  Given the coefficients $\alpha_{i,j}$ discovered by the optimization for the missing mass of a sample $X^{n} \sim p$, the estimator of the missing mass for any sample $X^m\sim p$ is given as
  \begin{equation}
    \left[\sum_{i = 1}^{n}\sum_{j=i+m-n}^{m}\frac{\alpha_{i,j+n-m}}{\binom{j}{i}}\Phi_i(j) \right]+\left[\sum_{i = 1}^{n}\frac{\alpha_{i,n+1}}{\binom{n+1}{i}}\Phi_i \right].
    \label{eq:adapted-estimator}
  \end{equation}
  It is worth noting that the adapted estimator from the minimal-MSE estimator for a sample of size $n$ to a sample of size $m$ is not necessarily the minimal-MSE estimator for the sample of size $m$. Yet, it lessens the computational burden of finding the minimal-MSE estimator for a larger sample size and may have a lower MSE (than the GT estimator) if the (relative variance of the) frequencies $\Phi$ are similar in the extended sample. We empirically investigate this property in our experiments.
}

\textbf{Distribution-free}. While our approach itself is \emph{distribution-free}, the output is \emph{distribution-specific}, i.e., the discovered estimator has a minimal MSE on the specific, unknown distribution.

\section{Experiment}
\label{sec:experiment}

We design experiments to evaluate the performance (i)~of our minimal-bias estimator $\hat M_k^B$ and (ii)~of our the minimal-MSE estimator $\hat M_k^\text{Evo}$ that is discovered by our genetic algorithm against the performance of the widely-used Good-Turing estimator $\hat M_k^G$ \citep{good1953}.

\textbf{Distibutions}. We use the same six multinomial distributions that are used in previous evaluations~\citep{Orlitsky:2015aa,Orlitsky:2016aa,Hao:2020aa}: a uniform distribution (\uniform),  a half-and-half distribution where half of the elements have three times of the probability of the other half (\half), two Zipf distributions with parameters $s=1$ and $s=0.5$ (\zipf, \zipfhalf), and distributions generated by Dirichlet-1 prior and Dirichlet-0.5 prior (\diri, \dirihalf, respectively).

\textbf{Open Science and Replication}. For scrutiny and replicability, we publish all our evaluation scripts at: \url{https://github.com/niMgnoeSeeL/UnseenGA}.

\subsection{Evaluating our Minimal-Bias Estimator}
\begin{itemize}[itemsep=0cm,leftmargin=0.3cm]\normalsize
  \item \textbf{RQ1}. \emph{How does our estimator for the missing mass $\hat M_0^B$ compare to the Good-Turing estimator $\hat M_0^G$ in terms of bias as a function of sample size $n$?}
  \item \textbf{RQ2}. \emph{How does our estimator for the total mass $\hat M_k^B$ compare to the Good-Turing estimator $\hat M_k^G$ in terms of bias as a function of frequency $k$?}
  \item \textbf{RQ3}. \emph{How do the estimators compare in terms of variance and mean-squared error?}
\end{itemize}
We focus specifically on the \emph{bias} of $\hat M_k^B$, i.e., the average difference between the estimate and the expected value $\E{M_k}$. We expect that the bias of the \emph{missing mass} estimate $\hat M_0^B$ as a function of $n$ across different distributions provides empirical insight for our claim that how much is unseen chiefly depends on information about the seen.

\begin{figure}[!t]
  \minipage{0.50\textwidth}
  \centering
  \includegraphics[width=.8\textwidth,trim={0 450 0 450},clip]{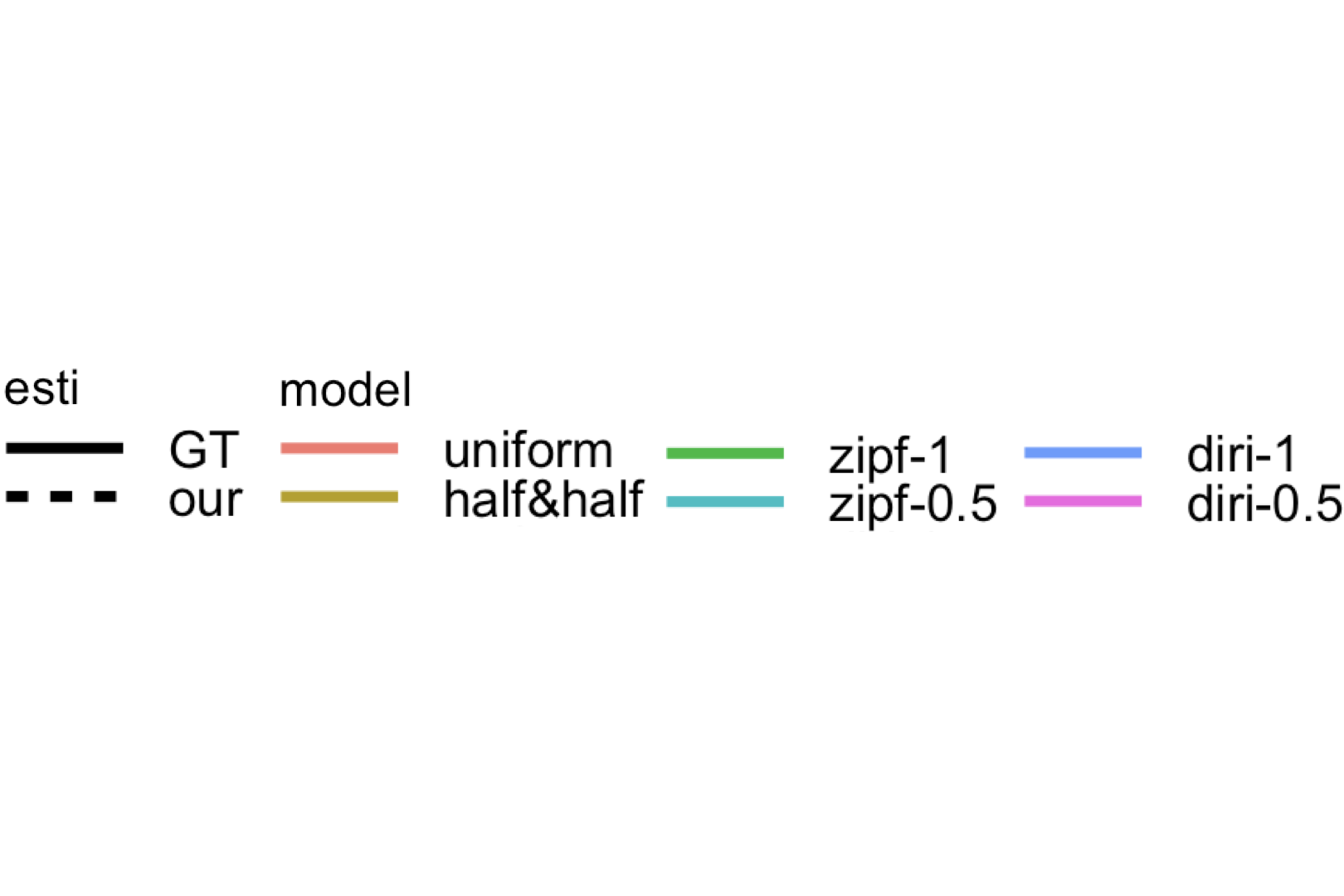}
  \includegraphics[width=.49\textwidth]{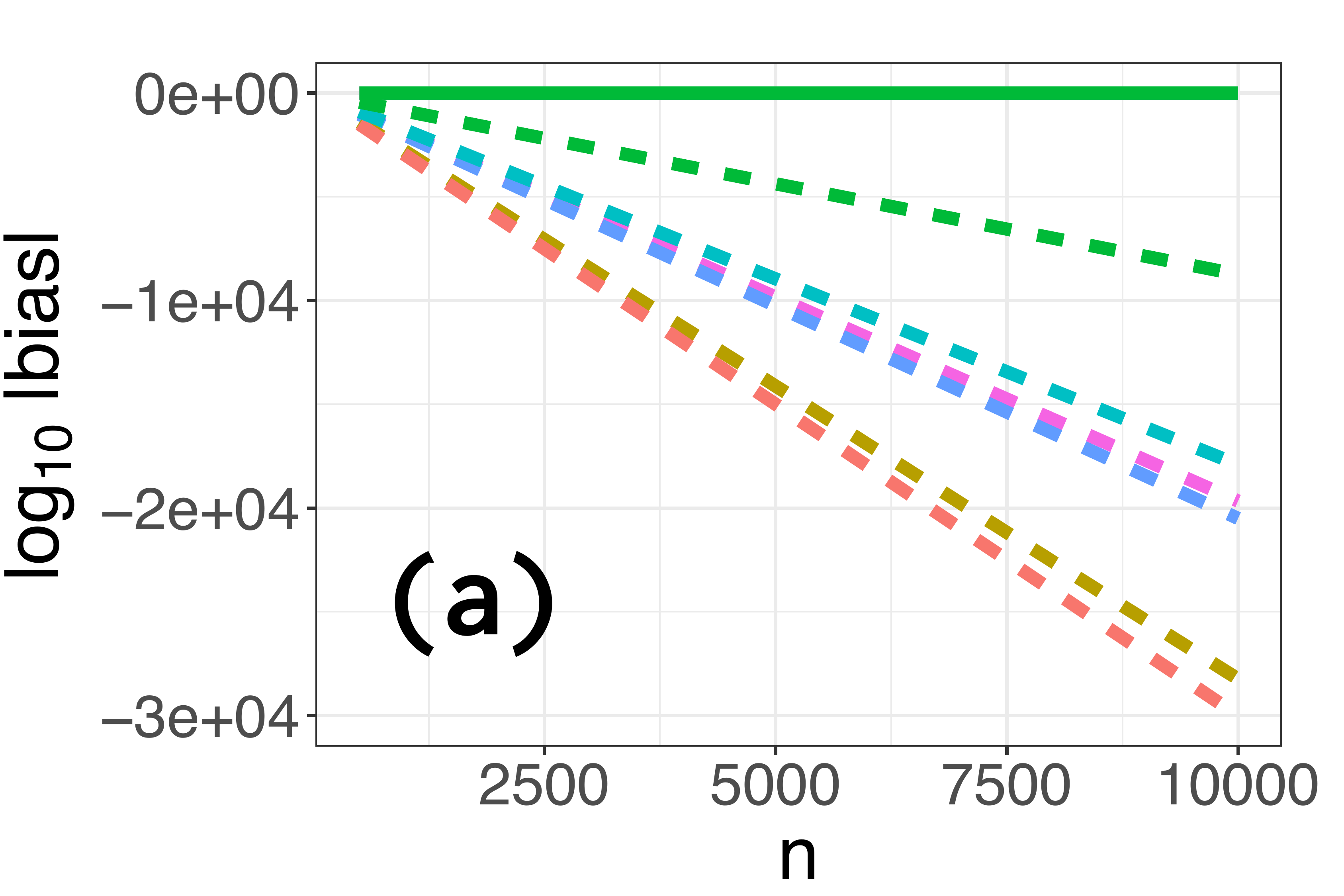}
  \includegraphics[width=.49\textwidth]{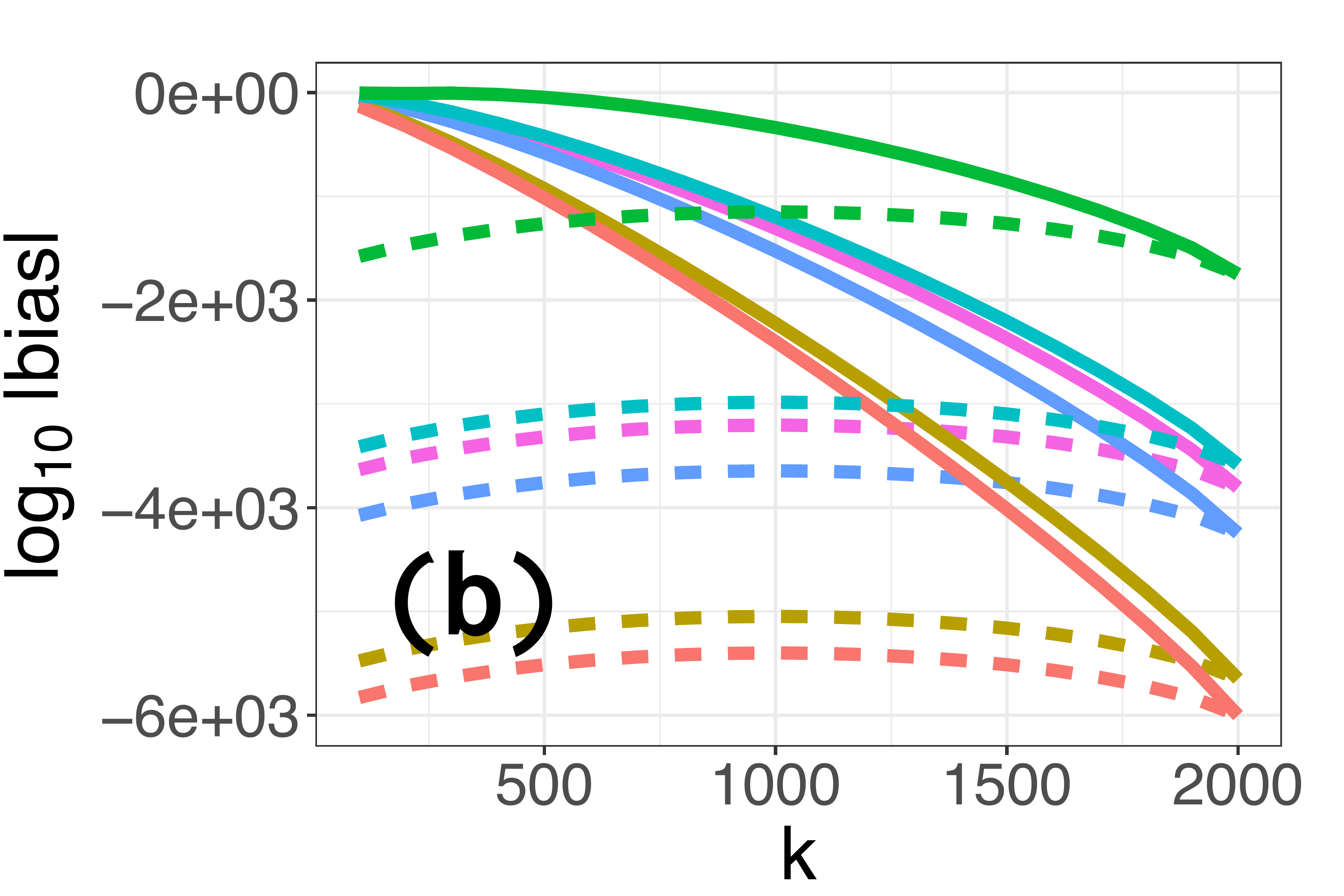}
  \captionof{figure}{Absolute bias of $\hat M_0^B$ and $\hat M_0^G$ (a) as a function of $n$ for $k=0$ and (b)~as a function of $k$ for $n=2000$ ($S = 1000$, log-scale).}
  \label{fig:missing-mass-figure}
  \endminipage\hfill
  \minipage{0.45\textwidth}
  \centering
  \resizebox{\textwidth}{!}{
    \begin{tabular}{@{}rc|rrr@{}}
      \toprule
      $n$                   & $\hat M_0$ & Bias          & Var        & MSE        \\ \midrule
      \multirow{2}{*}{100}  & GT         & 3.6973e-003   & 2.3372e-03 & 2.3508e-03 \\
                            & ours       & 1.0000e-200   & 2.3515e-03 & 2.3515e-03 \\ \midrule
      \multirow{2}{*}{500}  & GT         & 6.6369e-005   & 1.1430e-05 & 1.1434e-05 \\
                            & ours       & $<$ 1.00e-200 & 1.1445e-05 & 1.1445e-05 \\ \midrule
      \multirow{2}{*}{1000} & GT         & 4.3607e-007   & 4.3439e-08 & 4.3439e-08 \\
                            & ours       & $<$ 1.00e-200 & 4.3441e-08 & 4.3441e-08 \\ \bottomrule
    \end{tabular}}
  \captionof{table}{Bias, variance, and MSE of $\hat M_0^B$ and $\hat M_0^G$ for three values of $n$. (\uniform, $S = 100$). More results are in Appendix~\ref{sec:app:additional-results}.}
  \label{tbl:missing-mass-statistics}
  \endminipage
  \vspace{-0.5cm}
\end{figure}

\textbf{RQ.1}.
\autoref{fig:missing-mass-figure}(a) illustrates how fast our estimator $\hat M_k^B$ and the baseline estimator $\hat M_k^G$ (GT) approach the expected missing mass $\E{M_0}$ as a function of sample size $n$. As it might difficult for the reader to discern differences across distributions for the baseline estimator, we refer to \autoref{sec:app:additional-results}, where we zoom into a relevant region.
The \emph{magnitude} of our estimator's bias is significantly smaller than the magnitude of GT's bias \emph{for all distributions} (by thousands of orders of magnitude).
\autoref{fig:missing-mass-figure}(a) also nicely illustrates the \emph{exponential decay} of our estimator in terms of $n$ and how our estimator is less biased than GT by an exponential factor.

In terms of distributions, a closer look at the performance differences confirms our suspicion that the bias of our estimator is strongly influenced by the probability $p_{\max}$ of the most abundant element.
In fact, by Eqn.~(\ref{eq:nineteen}) the absolute bias of our estimator is minimized when $p_{\max}$ is minimized.
If we ranked the distributions by values of $p_{\max}$ with the smallest value first $\langle \text{\uniform, \half, \zipfhalf, \zipf}\rangle$,\footnote{\diri and \dirihalf are not considered because multiple distributions are sampled from the Dirichlet prior.} we would arrive at the same ordering in terms of performance of our estimator as shown in \autoref{fig:missing-mass-figure}(a). Similar observations can be made for GT, but the ordering is different (check Appendix~\ref{sec:app:additional-results}).

\textbf{RQ2}.  \autoref{fig:missing-mass-figure}(c) illustrates for both estimators of the \emph{total mass} $M_k$ how the bias behaves as $k$ varies between $0$ and $n=2000$ when $S=1000$.
The trend is clear; the bias of our estimator is
strictly smaller than the bias of GT for all $k$ and all the
distributions. The difference is the most significant for rare elements
(small $k$) and gets smaller as $k$ increases. The bias of our estimator is maximized when $k=1000=0.5n$, the bias for GT when $k=0$.

\textbf{RQ3}. Table~\ref{tbl:missing-mass-statistics} shows variance and MSE of both estimators for the missing mass $M_0$ for the \uniform and three values of $n$. As we can see, the MSE of our estimator is approximately the same as that of GT. The reason is that the MSE is dominated by the variance. We make the same observation for all other distributions (see Appendix~\ref{sec:app:additional-results}). The MSEs of both estimators are comparable.

\subsection{Evaluating our Estimator Discovery Algorithm}

\begin{itemize}[itemsep=0cm,leftmargin=0.3cm]\normalsize
  \item \textbf{RQ1} (Effectiveness). \emph{How does our estimator for the missing mass $\hat M_0^\text{Evo}$ compare to the Good-Turing estimator $\hat M_0^G$ in terms of MSE?\footnote{\iclr{We also considered more recent related work that can be considered to estimate the missing mass~\citep{painskyGeneralizedGoodTuringImproves2023,valiantEstimatingUnseenImproved2017,wuCHEBYSHEVPOLYNOMIALSMOMENT2019}; the results are in Appendix~\ref{sec:app:related-work}.}}}
  \item \textbf{RQ2} (Efficiency). \emph{How long does it take for our GA to generate an estimator $M_k^\text{Evo}$?}
        \iclr{\item \textbf{RQ3} (Larger Sample). \emph{How does $\hat M_k^\text{Evo}$ generated from a sample of size $n$ perform on a sample of size $m > n$?}}
  \item \textbf{RQ4} (Distribution-awareness). \emph{How well does an estimator discovered from a sample from one distribution perform on another distribution in terms of MSE?}
  \item \textbf{RQ5} (Empirical Application) \emph{How does our estimator perform in a real-world application?}
\end{itemize}
\iclr{To handle the randomness in our evaluation, we repeat each experiment we repeat the experiments 100 times, i.e., we take 100 different samples $X^n$ of size $n$.}
More details about our experimental setup can be found in Appendix~\ref{sec:app:evolutionary-algorithm}.

\begin{table}
  \centering
  \caption{ The MSE of the best evolved estimator $M_0^{\text{Evo}}$ and GT estimator $\hat M_0^G$ for the missing mass $M_0$, the success rate $\hat A_{12}$, and the ratio (Ratio, $\mathit{MSE}(\hat M_0^{\text{Evo}})/\mathit{MSE}(M_0^G)$) for three sample sizes $n$ and six distributions with support size \iclr{$S = 100$}.}
  \label{tbl:evolution}
  \iclr{
    \resizebox{\textwidth}{!}{
      \begin{tabular}{l|rrrr|rrrr|rrrr} \toprule
        \multirow{2}{*}{Dist.} & \multicolumn{4}{c|}{$n=S/2$}                   & \multicolumn{4}{c|}{$n=S$}                           & \multicolumn{4}{c}{$n=2S$}                                                                                                                                                                                                                                                                                                                                                                                      \\
                               & \multicolumn{1}{c}{$\mathit{MSE}(\hat M_0^G)$} & \multicolumn{1}{c}{$\mathit{MSE}(M_0^{\text{Evo}})$} & \multicolumn{1}{c}{$\hat A_{12}$} & \multicolumn{1}{c|}{Ratio} & \multicolumn{1}{c}{$\mathit{MSE}(\hat M_0^G)$} & \multicolumn{1}{c}{$\mathit{MSE}(M_0^{\text{Evo}})$} & \multicolumn{1}{c}{$\hat A_{12}$} & \multicolumn{1}{c|}{Ratio} & \multicolumn{1}{c}{$\mathit{MSE}(\hat M_0^G)$} & \multicolumn{1}{c}{$\mathit{MSE}(M_0^{\text{Evo}})$} & \multicolumn{1}{c}{$\hat A_{12}$} & \multicolumn{1}{c}{Ratio} \\ \midrule
        \uniform               & 1.09e-02                                       & 7.94e-03                                             & 0.88                              & 72\%                       & 6.05e-03                                       & 4.29e-03                                             & 0.97                              & 70\%                       & 1.93e-03                                       & 1.73e-03                                             & 0.96                              & 89\%                      \\
        \half                  & 1.14e-02                                       & 7.16e-03                                             & 0.90                              & 63\%                       & 5.46e-03                                       & 4.07e-03                                             & 0.98                              & 74\%                       & 1.57e-03                                       & 1.42e-03                                             & 0.93                              & 90\%                      \\
        \zipf                  & 8.09e-03                                       & 7.37e-03                                             & 0.87                              & 91\%                       & 3.42e-03                                       & 3.04e-03                                             & 0.89                              & 88\%                       & 1.26e-03                                       & 1.08e-03                                             & 0.94                              & 85\%                      \\
        \zipfhalf              & 1.08e-02                                       & 8.13e-03                                             & 0.91                              & 75\%                       & 5.23e-03                                       & 4.16e-03                                             & 0.96                              & 79\%                       & 1.73e-03                                       & 1.54e-03                                             & 0.97                              & 88\%                      \\
        \diri                  & 1.10e-02                                       & 7.97e-03                                             & 0.92                              & 72\%                       & 4.36e-03                                       & 3.47e-03                                             & 0.92                              & 79\%                       & 1.23e-03                                       & 1.05e-03                                             & 0.91                              & 85\%                      \\
        \dirihalf              & 9.90e-03                                       & 8.02e-03                                             & 0.87                              & 81\%                       & 3.47e-03                                       & 2.86e-03                                             & 0.88                              & 82\%                       & 9.41e-04                                       & 8.08e-04                                             & 0.86                              & 85\%                      \\ \midrule
        Avg.                   &                                                &                                                      & 0.89                              & 76\%                       &                                                &                                                      & 0.93                              & 79\%                       &                                                &                                                      & 0.93                              & 87\%                      \\ \bottomrule
      \end{tabular}}}
\end{table}

\textbf{RQ.1} (Effectiveness).
Table~\ref{tbl:evolution} shows average MSE of the estimator $M_0^{\text{Evo}}$ discovered by our genetic algorithm and that of the GT estimator $\hat M_0^G$ for the missing mass $M_0$ across three sample sizes.
We measure effect size using \emph{Vargha-Delaney} $\hat A_{12}$ \citep{Vargha:2000aa} (\emph{success rate}), i.e., the probability that the MSE of the estimator discovered by our genetic algorithm has a smaller MSE than the GT estimator (\emph{larger is better}). Moreover, we measure the MSE of our estimator as a proprtion of the MSE of GT, called \emph{ratio} (\emph{smaller is better}).
Results for other $S$ is in Appendix~\ref{sec:app:additional-results}.

Overall, the estimator discovered by our GA performs \emph{significantly} better than GT estimator in terms of MSE (avg. \iclr{$\hat A_{12}>0.89$; $ratio < 87\%$}). The performance difference increases with sample size $n$. When the sample size is twice the support size ($n=2S$), in \iclr{93}\% of runs our discovered estimator performs better. The average MSE of our estimator is somewhere between \iclr{76\% and 87\%} of the MSE of GT.
The high success rate and the low ratio of the MSE shows that the GA is effective in finding the estimator with the minimal MSE for the missing mass $M_0$. A Wilcoxon signed-rank test shows that all performance differences are statistically significant at $\alpha < 10^{-9}$.
In terms of distributions, the performance of our estimator is similar across all distributions, showing the generality of our algorithm. \iclr{The worst performance is for the \zipf distribution, though it is still 85-91\% of the GT estimator's MSE.}
The potential reason for this is due to the overfitting to the approximated distribution $\hat p_x$. Since the \zipf is the most skewed distribution, there are more elements unseen in the sample than in other distributions, which makes the approximated distribution $\hat p_x$ less accurate.

\textbf{RQ.2} (Efficiency). The time GA takes is reasonable; to compute an estimator in Table~\ref{tbl:evolution}, it takes \iclr{57.2s on average (median: 45.3s)}. The average time per iteration is \iclr{0.19s (median: 0.16s)}.

\begin{figure}[!t]
  \minipage{0.52\textwidth}
  \centering
  \iclr{
  \resizebox{\textwidth}{!}{
    \begin{tabular}{l|rcrcrc}
      \toprule
      \multirow{2}{*}{Dist.} & \multicolumn{2}{c}{$c=2$} & \multicolumn{2}{c}{$c=5$} & \multicolumn{2}{c}{$c=10$}                             \\
                             & Ratio                     & $p<.05$                   & Ratio                      & $p<.05$ & Ratio & $p<.05$ \\
      \midrule
      \uniform               & 1.00                      & False                     & 1.00                       & False   & 0.99  & True    \\
      \half                  & 0.95                      & True                      & 0.96                       & True    & 0.98  & True    \\
      \zipfhalf              & 0.97                      & True                      & 0.98                       & True    & 0.99  & True    \\
      \zipf                  & 0.93                      & True                      & 0.95                       & True    & 0.97  & True    \\
      \diri                  & 0.91                      & True                      & 0.93                       & True    & 0.95  & True    \\
      \dirihalf              & 0.93                      & True                      & 0.95                       & True    & 0.96  & True    \\
      \bottomrule
    \end{tabular}}
  \captionof{table}{The MSE comparison for the missing mass $M_0$ ($S=100$, $n=100$) for extended samples $X^{cn}$ ($c\in\{2,5,10\}$) between the GT estimator $\hat M_0^G$ and the adapted estimator from the evolved estimator $\hat M_0^{\text{Evo}}$ for $X^n$. `Ratio' is the ratio of the MSE ($\mathit{MSE}(\hat M_0^{\text{Evo}})/\mathit{MSE}(M_0^G)$) and `$p<.05$' is the result of the (one-sided) Wilcoxon signed-rank test.}
  \label{tbl:evolution-wilcoxon}}
  \endminipage\hfill
  \minipage{0.44\textwidth}
  \centering
  \includegraphics[width=\textwidth]{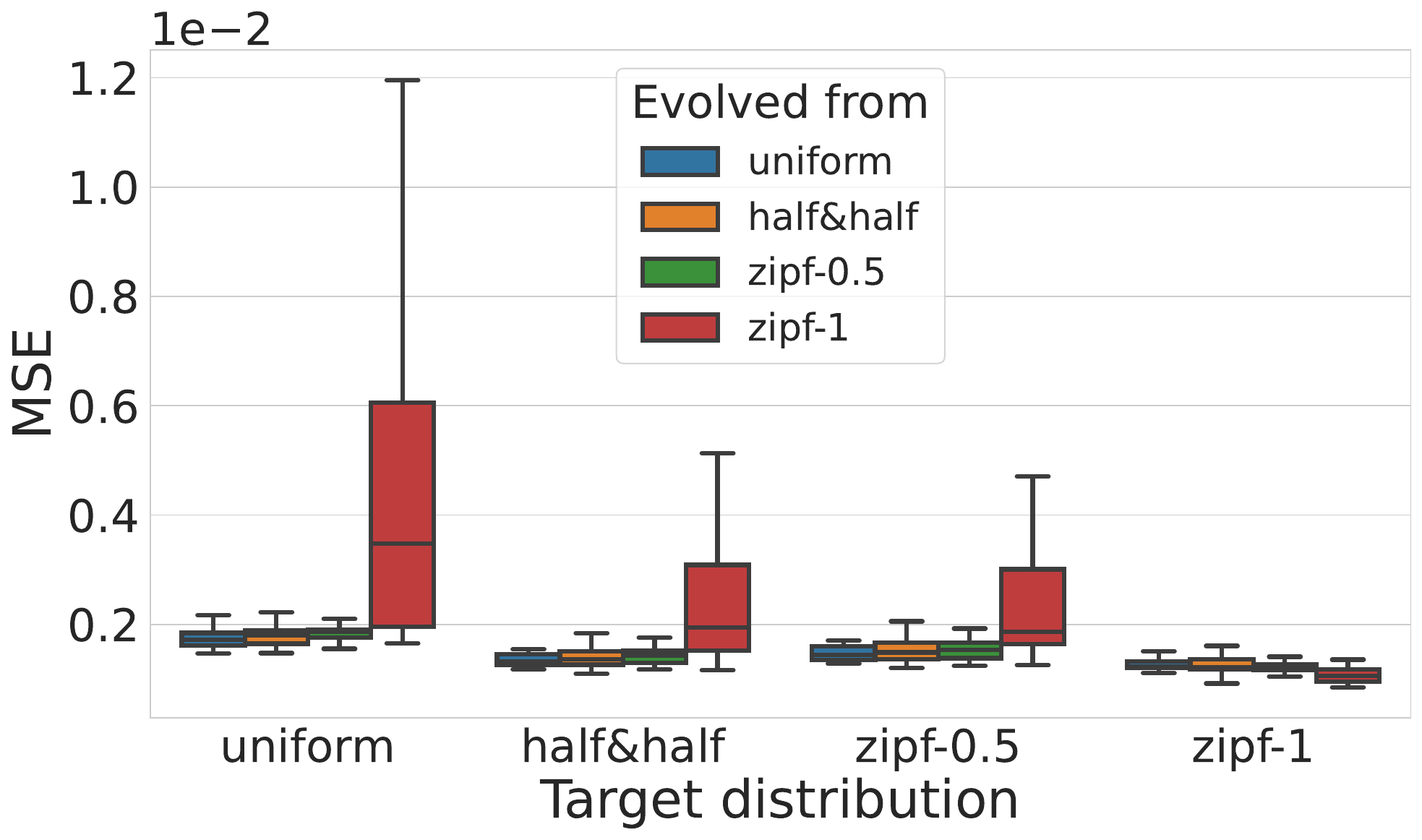}
  \caption{The MSE of an estimator discovered using a sample ($S,n=100,200$) from one distribution (individual boxes)
    applied to another target distribution (box clusters).}
  \label{fig:evolution-other-distribution}
  \endminipage
  \vspace{-0.5cm}
\end{figure}

\iclr{\textbf{RQ.3} (Larger Sample).
Table~\ref{tbl:evolution-wilcoxon} shows how the estimator $\hat M_0^{\text{Evo}}$ that is discovered for a given sample $X^n$ of size $n$ performs on an extended larger sample $X^{cn}$ ($c \in \{2, 5, 10\}$) by adapting the coefficients $\alpha_{i,j}$
for the larger sample as described in Eqn.~(\ref{eq:adapted-estimator}). To evaluate the performance, we sample $X^{cn - n}$ additional samples from the same distribution and compute the missing mass $M_0$ for the extended sample $X^{cn}$ using the adapted estimator as well as the GT estimator $\hat M_0^G$; the entire process is repeated 10K times to calculate the MSE. The results show that the adapted estimator performs better than the GT estimator for most distributions and sample sizes. The ratio of the MSE is between 0.91 and 1.00, indicating that the adapted estimator performs better than the GT estimator but not as much as the evolved estimator for the original sample $X^n$, as they are not genuinely designed for the larger sample. The performance of the adapted estimator degraded for the \uniform distribution. The possible reason is that because \uniform has, technically, no \emph{rare} classes, it is easier to find the unseen/rarely seen classes in the extended sample, making the relative variance between frequencies of frequencies ($\Phi$) differ a lot from the original sample.
For instance, when $n = 100$, the variance of the number of singletons
$\Var{\Phi_1(100)}(\approx23.37)$ is much larger than the variance of the number of doubletons
$\Var{\Phi_2(100)}(\approx11.63)$ for the \uniform distribution.
However, the order already reverses when $n = 200$ ($\Var{\Phi_1(200)}\approx16.04 < \Var{\Phi_2(200)}\approx19.84$), which becomes more significant when $n = 1000$ ($\Var{\Phi_1(1000)}\approx0.043 \ll \Var{\Phi_2(2000)}\approx0.216$).
}

\textbf{RQ.4} (Distribution-awareness).
Figure~\ref{fig:evolution-other-distribution} shows the performance of an estimator discovered from a sample from one distribution (source) when applied to another distribution (target). Applying an estimate from the \zipf on the \zipf gives the optimal MSE (right-most red box). However, applying an estimator from the \zipf on the \uniform (left red box) yields a huge increase in variance. In terms of effect size, we measure a Vargha Delaney \iclr{$\hat A_{12}> 0.95$} between the ``home'' and ``away'' estimator.
\iclr{While the \uniform also shows that the home estimator performs best on the home distribution ($\hat A_{12} = 0.61$ (small))}, the difference between the estimators from \uniform, \half, and \zipfhalf is less significant.
Perhaps unsurprisingly, an estimator performs optimal when the source of the samples is similar to the target distribution.

\textbf{RQ.5} (Empirical Application).
We briefly demonstrate the performance of our estimator in two real-world applications.
We first use the Australian population-data-by-region dataset (\citep{ARVIDSSON:2023aa}, $S=104$) to estimate $M_0$ for $n=50$ random data points. The ground truth $M_0$ is 0.476. The GT estimator $\hat M_0^G$ estimates $M_0$ as [0.322, 0.638] with a 95\% confidence interval (CI) demonstrating a huge variance. In contrast, our estimator $\hat M_0^{\text{Evo}}$ estimates $M_0$ as [0.409, 0.565] (95\%-CI) demonstrating only 25\% of the MSE of GT.
We also apply our method to the Shakespeare dataset~\citep{ShakespearePlays} ($S=935$, $|\text{Datatset}|=111,396$), commonly used in the literature~\citep{Efron:1976aa}, focusing on missing mass for player frequency. For $n = \langle 100, 200, 500\rangle$, the MSE of our estimator is $\langle3.0\!\times\!10^{-3}, 2.1\!\times\!10^{-3}, 8.8\!\times\!10^{-4}\rangle$ compared to the GT estimator with $\langle4.2\!\times\!10^{-3}, 2.6\!\times\!10^{-3}, 9.0\!\times\!10^{-4}\rangle$, respectively, showing that our method consistently outperforms the Good-Turing estimator across all sample sizes.
The result shows that our approach lead to a substantial and significant decrease of the MSE in the real-world application.

\textbf{Summary}. To summarize, our GA is effective in finding the estimator with the minimal MSE for the missing mass $M_0$ with the smaller MSE than GT estimator $\hat M_0^G$ for all distributions and sample sizes. The effect is substantial and significant and the average decrease of the MSE is roughly one fifth against GT estimator $\hat M_0^G$.
We report results of additional experimental results, including the variance of the GA and probability mass estimation ($k > 0$),
in Appendix~\ref{sec:app:additional-results} \& \ref{sec:app:related-work}.

\vspace{-.5em}
\section{Discussion}
\vspace{-.5em}

\noindent \textbf{Beyond the General Estimator.} In this study, we propose a ``meta'' estimation methodology that can be applied to a set of samples from a specific unknown distribution. The conventional approach is to develop \emph{an estimator} for an arbitrary distribution. Yet, each distribution has its own characteristics, and, because of that, the ``shape'' of the (frequencies of) frequencies of the classes in the sample differs across distributions (e.g., between the uniform and the Zipf distribution). In contrast to the conventional approach, we propose a \emph{distribution-free} methodology to discover the a \emph{distribution-specific} estimator with low MSE (given only the sample). Note that, while we use the genetic algorithm to discover the estimator, any optimization method can be used to discover the estimator, for instance, a constrained optimization solver.

\noindent \textbf{Extrapolation.} Estimating the probability to discover a new species if the sample was enlarged by \emph{one} is a well-known problem in many scientific fields, such as ecology, linguistics, and machine learning, and software testing \citep{accountleak,liyanageExtrapolatingCoverageRate2024,reachability,statReasoning,stads,residual2021}. Given a sample of size $n$, what is the expected number $\E{U(t)}$ of new species discovered if $t$ times more samples were taken ($\E{U(t)}=f_0(n) - f_0(n + nt)$)? \citet{Good:1956aa} proposed a seminal estimator based on $\Phi_k$. Using the Poisson approximation, this estimator has been improved in several ways~\citep{Hao:2020aa}. We believe that our analysis can be extended to the Good-Toulmin estimator seeking more accurate estimators for $U(t)$.

\newpage

\subsubsection*{Acknowledgments}
We thank the anonymous reviewers for their valuable feedback.
This work is partially funded by the European Union. The views and opinions expressed are however those of the author(s) only and do not necessarily reflect those of the European Union or the European Research Council Executive Agency. Neither the European Union nor the granting authority can be held responsible for them. This work is supported by ERC grant (Project AT SCALE, 101179366).
This work is also partially funded by the Deutsche Forschungsgemeinschaft (DFG, German Research Foundation) under Germany's Excellence strategy - EXC2092 - Project \#390781972.

\bibliography{references}
\bibliographystyle{iclr2025_conference}

\newpage

\appendix
\section{Comparing the Bias of the Estimators}
\label{sec:app:bias-comparison}

In Section~\ref{sec:estimator}, we have shown that the bias of a simpler variant of GT, $\hat M_k^{G'}=\frac{k+1}{n-k}\Phi_{k+1}$, is larger by an exponential factor than the absolute bias of our minimal bias estimator $\hat M_k^B$. In this section, we show that the bias of the original GT estimator $\hat M_k^G=\frac{k+1}{n}\Phi_{k+1}$ is also larger by an exponential factor than the absolute bias of $\hat M_k^B$ for a sufficiently larger sample size. Recall that
\begin{equation}
  \text{\emph{Bias}}_{G'} = \mathbb{E}\left[\hat M_k^{G} - M_k\right] = \frac{k+1}{n-k}f_{k+1}(n) - {n \choose k}g_{k+1}(n+1) = \sum_x {n \choose k} p_{x}^{k + 2}(1-p_{x})^{n-k-1},\\
\end{equation}
and
\begin{align}
  \text{\emph{Bias}}_{G} = \mathbb{E}\left[\hat M_k^G - M_k\right] & = {n \choose k}g_{k + 2}(n+1)-\frac{k(k+1)}{n(n-k)}f_{k+1}(n)                                                    \\
                                                                   & = {n \choose k}g_{k + 2}(n+1) - \binom{n-1}{k-1} g_{k + 1}(n)                                                    \\
                                                                   & = \sum_{x} p_x^{k + 2}(1 - p_x)^{n - k - 1} \left(\binom{n}{k} - \frac{1}{p_x} \cdot \binom{n - 1}{k - 1}\right) \\
                                                                   & = \sum_{x} \binom{n}{k} p_x^{k + 2}(1 - p_x)^{n - k - 1} \left(1 - \frac{k}{n\cdot p_x}\right)                   \\
                                                                   & \ge \left(1 - \frac{k}{n \cdot p_{\max}} \right) \text{\emph{Bias}}_{G'},
\end{align}
where $1 - \frac{k}{n \cdot p_{min}} > 0$ when $n$ is sufficiently large.
Above inequality leads to the following:
\begin{equation}
  \frac{\text{\emph{Bias}}_{G}}{\text{\emph{Bias}}_{G'}} \ge \left(1 - \frac{k}{n \cdot p_{\max}}\right)\text{,} \quad \text{while} \quad \frac{\text{\emph{Bias}}_{B}}{\text{\emph{Bias}}_{G'}} \le \frac{Sp_{\max}^{k + 2}}{p_{\min}^{k + 2}}\left(\frac{1 - p_{\min}}{p_{\max}}\right)^{-n + k + 1},
\end{equation}
which proves our claim.

\section{Bounding the Variance of $\hat M_k^{B}$}
\label{sec:app:variance}

The variance of the linear combination of random variables is given by
\begin{equation}
  \label{eq:var-span}
  \Var{\sum_i c_i X_i} = \sum_i c_i^2 \Var{X_i} + \sum_{i \neq j} c_i c_j \Cov{X_i, X_j}.
\end{equation}
Therefore, the variance and the covariance of $\Phi_i(n)$s are the missing pieces to compute the variance of $\hat M_k^{B}$.

\begin{theorem}
  Given the multinomial distribution $p = (p_1, \dots, p_S)$ with support size $S$, the variance of $\Phi_i = \Phi_i(n)$ from $n$ samples $X^n$ is given by
  \label{thm:variance-phi}
  \begin{equation}
    \Var{\Phi_i(n)} = \begin{cases}
      f_i(n)- f_i(n)^2 + \sum_{x \neq y} \frac{n!}{i!^2 (n - 2i)!} p_x^i p_y^i (1 - p_x - p_y)^{n - 2i} & \text{if $2i \leq n$,} \\
      f_i(n) - f_i(n)^2                                                                                 & \text{otherwise.}
    \end{cases}
  \end{equation}
\end{theorem}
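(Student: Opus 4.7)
The plan is to compute $\E{\Phi_i(n)^2}$ directly by expanding the square of the defining indicator sum, and then subtract $\E{\Phi_i(n)}^2 = f_i(n)^2$. The main technical ingredient is the joint distribution of two distinct frequencies $(N_x, N_y)$ under the multinomial.

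First I would write
\begin{align*}
\Phi_i(n)^2 = \left(\sum_{x\in\mathcal{X}}\indicator(N_x=i)\right)^2 = \sum_{x\in\mathcal{X}}\indicator(N_x=i) + \sum_{x\neq y}\indicator(N_x=i,\,N_y=i),
\end{align*}
using that the indicator is idempotent. Taking expectations yields $\E{\Phi_i(n)^2} = f_i(n) + \sum_{x\neq y} P(N_x=i, N_y=i)$. It therefore suffices to identify the pairwise joint probabilities.

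Next I would compute $P(N_x=i, N_y=i)$ from the multinomial structure. Grouping outcomes into the three categories \emph{``equals $x$''}, \emph{``equals $y$''}, and \emph{``equals neither''} with respective probabilities $p_x$, $p_y$, $1-p_x-p_y$, the event $\{N_x=i, N_y=i\}$ is precisely the event that the first category has count $i$, the second has count $i$, and the third has count $n-2i$. When $2i \le n$ this is a valid multinomial probability,
\begin{align*}
P(N_x=i, N_y=i) = \frac{n!}{i!\,i!\,(n-2i)!}\, p_x^{\,i} p_y^{\,i}(1-p_x-p_y)^{n-2i},
\end{align*}
and when $2i > n$ the event is vacuously impossible, giving probability $0$. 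Plugging back and subtracting $\E{\Phi_i(n)}^2 = f_i(n)^2$ gives exactly the two cases stated in the theorem.

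The only step that requires care is the case split on $2i$ versus $n$ and the verification that the trinomial coefficient $\binom{n}{i,i,n-2i}$ is the correct combinatorial factor; neither is deep, but both must be stated explicitly so the $(1-p_x-p_y)^{n-2i}$ term is not mistaken for a quantity that could become negative or undefined when $2i>n$. No further analytic machinery (Poisson approximation, generating functions, etc.) is needed: the argument is a direct second-moment computation, and the main ``obstacle'' is simply bookkeeping the multinomial coefficients cleanly.
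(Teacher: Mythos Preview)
Your proposal is correct and follows essentially the same approach as the paper: expand $\Phi_i^2$ into diagonal and off-diagonal indicator terms, evaluate the cross term $\E{\indicator(N_x=i\land N_y=i)}$ via the trinomial $(p_x,p_y,1-p_x-p_y)$ structure with the case split $2i\le n$ versus $2i>n$, and subtract $f_i(n)^2$. Your write-up is, if anything, slightly more explicit than the paper's in justifying the trinomial coefficient and the infeasibility when $2i>n$.
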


\begin{proof}
  \begin{align}
    \Var{\Phi_i}            & = \E{\Phi_i^2} - \E{\Phi_i}^2                                                                                                             \\
    \E{\Phi_i^2}            & = \E{\left(\sum_x \indicator(N_x = i)\right)^2}                                                                                           \\
                            & = \E{\sum_x \indicator(N_x = i) + \sum_{x \neq y} \indicator(N_x = i \land N_y = i)}                                       \label{eq:var} \\
                            & = \begin{cases}
                                  f_i(n) + \sum_{x \neq y} \frac{n!}{i!^2 (n - 2i)!} p_x^i p_y^i (1 - p_x - p_y)^{n - 2i} & \text{if $2i \leq n$,} \\
                                  f_i(n)                                                                                  & \text{otherwise.}
                                \end{cases}                        \\
    \therefore \Var{\Phi_i} & = \begin{cases}
                                  f_i(n) + \sum_{x \neq y} \frac{n!}{i!^2 (n - 2i)!} p_x^i p_y^i (1 - p_x - p_y)^{n - 2i} - f_i(n)^2 & \text{if $2i \leq n$,} \\
                                  f_i(n) - f_i(n)^2                                                                                  & \text{otherwise.}
                                \end{cases}
  \end{align}
\end{proof}

Now we compute the upper bound of the variance of $\hat M_k^{B}$.

\begin{lemma}
  \label{lemma:variance-phi-bound}
  \begin{equation}
    \Var{\Phi_i} \begin{cases}
      \leq S f_i(n) - f_i(n)^2 & \text{if $2i \leq n$.} \\
      = f_i(n) - f_i(n)^2      & \text{otherwise.}
    \end{cases}
  \end{equation}
\end{lemma}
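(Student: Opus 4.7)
The plan is to reduce everything to Theorem~\ref{thm:variance-phi}, which already gives an exact expression for $\Var{\Phi_i}$ in both cases, and then just bound the one nontrivial term. For $2i > n$, Theorem~\ref{thm:variance-phi} already yields $\Var{\Phi_i} = f_i(n) - f_i(n)^2$ directly, so that case needs no further work and the lemma holds with equality.

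For $2i \le n$, the only thing to prove is
\begin{equation*}
\sum_{x \neq y} \frac{n!}{i!^2 (n - 2i)!}\, p_x^i p_y^i (1 - p_x - p_y)^{n - 2i} \;\le\; (S-1)\, f_i(n),
\end{equation*}
since plugging this into the formula of Theorem~\ref{thm:variance-phi} immediately gives $\Var{\Phi_i} \le f_i(n) + (S-1) f_i(n) - f_i(n)^2 = S f_i(n) - f_i(n)^2$. The key observation is that the summand on the left is nothing but the joint probability $P(N_x = i \land N_y = i)$ in the multinomial experiment: it is the multinomial coefficient $\binom{n}{i,i,n-2i}$ times the appropriate product of probabilities for the three-category split $\{x\}, \{y\}, \mathcal{X}\setminus\{x,y\}$.

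Given that reinterpretation, the bound is almost automatic from monotonicity of probability: $P(N_x = i \land N_y = i) \le P(N_x = i) = \binom{n}{i} p_x^i (1-p_x)^{n-i}$. Summing first over $y \neq x$ (giving a factor of at most $S-1$) and then over $x$ yields
\begin{equation*}
\sum_{x}\sum_{y\neq x} P(N_x = i \land N_y = i) \;\le\; (S-1) \sum_{x} P(N_x = i) \;=\; (S-1)\, f_i(n),
\end{equation*}
which is exactly the inequality we need.

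There is no real obstacle here; the only thing to get right is the bookkeeping identifying $\frac{n!}{i!^2 (n-2i)!} p_x^i p_y^i (1-p_x-p_y)^{n-2i}$ as a genuine multinomial joint probability (this needs $2i \le n$, otherwise the term is not defined as written, which is precisely why the two cases are split). Everything else reduces to the trivial bound $P(A \cap B) \le P(A)$, so the proof is essentially a three-line computation once Theorem~\ref{thm:variance-phi} is in hand.
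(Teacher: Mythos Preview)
Your proposal is correct and follows essentially the same route as the paper: both reduce to Theorem~\ref{thm:variance-phi}, and both bound the cross term by using $P(N_x=i \land N_y=i)\le P(N_x=i)$ and then summing over the $S-1$ choices of $y\neq x$ to obtain $(S-1)f_i(n)$. The paper states this in one line at the level of expectations of indicator sums, while you spell out the probabilistic interpretation of the multinomial term more explicitly, but the argument is the same.
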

\begin{proof}
  From Theorem~\ref{thm:variance-phi},
  \begin{align}
    \E{\Phi_i^2} & = f_i(n) + \E{\sum_{x \neq y} \indicator(N_x = i \land N_y = i)} \\
                 & \leq f_i(n) +  (S - 1) \E{\sum_x \indicator(N_x = i)}            \\
                 & = S f_i(n) \quad \text{(if $2i \leq n$)}.                        \\
  \end{align}
  The lemma directly follows from the above inequality.
\end{proof}

\begin{lemma}
  \label{lemma:bound}
  $$g_{i}(n) \leq S \cdot \bmin^{-n} \omax^{i},$$
  where $S = |\mathcal{X}|$, $p_{\max} = \max_{x \in \mathcal{X}} p_x$, $\bmin = \frac{1}{ 1 - p_{\min}}$, and $\omax = \frac{p_{\max}}{1 - p_{\max}}$.
\end{lemma}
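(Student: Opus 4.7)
The plan is to start from the definition $g_i(n) = f_i(n)/\binom{n}{i} = \sum_{x \in \mathcal{X}} p_x^i (1-p_x)^{n-i}$ and rewrite each summand so that the exponents $i$ and $n$ are separated. Specifically, I would factor
\[
 p_x^i (1-p_x)^{n-i} \;=\; \left(\frac{p_x}{1-p_x}\right)^{\!i} (1-p_x)^n,
\]
which cleanly isolates a term depending only on $i$ from a term depending only on $n$.

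Next I would bound each of the two factors by its extremal value over $x \in \mathcal{X}$. The map $p \mapsto p/(1-p)$ is strictly increasing on $[0,1)$, so $p_x/(1-p_x) \le p_{\max}/(1-p_{\max}) = \omax$, and raising to the positive power $i$ preserves the inequality. Similarly $1-p_x \le 1-p_{\min}$, so $(1-p_x)^n \le (1-p_{\min})^n = \bmin^{-n}$. Combining these two observations gives
\[
 p_x^i (1-p_x)^{n-i} \;\le\; \omax^{\,i}\,\bmin^{-n}
\]
uniformly in $x$. Summing over the $S = |\mathcal{X}|$ elements of the support then yields $g_i(n) \le S \cdot \bmin^{-n}\,\omax^{\,i}$, which is the claim.

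There is no real obstacle here: the argument is a direct factorization followed by two monotonicity bounds. The only minor care needed is to check that both $p_x/(1-p_x)$ and $1-p_x$ are well defined and behave monotonically on the relevant range (i.e., $p_x \in (0,1)$, which is implicit since any $x$ with $p_x = 0$ contributes $0$ to the sum and any $x$ with $p_x = 1$ is excluded by $S \ge 2$ in the interesting regime), and to note that $i$ and $n-i$ are nonnegative so that raising to these powers preserves the inequalities. This lemma will then feed into the proof of Theorem~\ref{thm:bt-var} by controlling the size of each $g_i(n)$ (and hence $f_i(n) = \binom{n}{i} g_i(n)$) that appears in the variance expansion of $\hat M_k^B$.
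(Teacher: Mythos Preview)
Your proposal is correct and matches the paper's proof essentially step for step: the paper likewise rewrites each summand as $\left(\frac{1}{1-p_x}\right)^{-n}\left(\frac{p_x}{1-p_x}\right)^i$, invokes the monotonicity of $\frac{1}{1-x}$ and $\frac{x}{1-x}$ on $(0,1)$ to bound each factor by $\bmin^{-n}$ and $\omax^i$, and then sums over the $S$ elements.
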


\begin{proof}
  $\frac{1}{1 - x}$ and $\frac{x}{1 - x}$ are increasing functions for $x \in (0, 1)$. Therefore,
  $$g_i(n) = \sum_{x \in \mathcal{X}} p_x^i (1-p_x)^{n-i} = \sum_{x \in \mathcal{X}}  \left(\frac{1}{1-p_x}\right)^{-n} \left(\frac{p_x}{1 - p_x}\right)^i \leq  |\mathcal{X}| \cdot \bmin^{-n} \omax^i.$$
\end{proof}

\begin{theorem}
  The variance of the estimator $\hat M_k^B$ is bounded as follows:
  $$\V(\hat M_k^B) \leq c_1 \cdot n^{2k + 1} \cdot c_2^{-n}, $$
  where $c_1 = S \cdot \left(\frac{e}{k}\right)^{2k}$, $c_2 = \min\left(\frac{1}{1 - p_{\min}}, \frac{1 - p_{\max}}{p_{\max}(1 - p_{\min})}\right)$. In other words, $\V(\hat M_k^B) = \mathcal{O}(n^{2k + 1} \cdot \bmin^{-n} \cdot \max(1, \omax^n))$.
\end{theorem}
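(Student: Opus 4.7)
The plan is to reduce the variance of the linear combination defining $\hat M_k^B$ to a sum of marginal variances of the $\Phi_{k+i}$'s, invoke Lemmas~\ref{lemma:variance-phi-bound} and~\ref{lemma:bound} on each term, and then control a residual geometric-type sum in $\omax^{1/2}$.

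Starting from the expansion in Eqn.~(\ref{eq:bt-var}), I would first apply the standard-deviation triangle inequality $\sigma\!\left(\sum_i a_i X_i\right) \le \sum_i |a_i|\,\sigma(X_i)$, which follows from Cauchy--Schwarz applied to each covariance term via $|\Cov{X,Y}| \le \sigma(X)\sigma(Y)$. This gives
$$
\sqrt{\V(\hat M_k^B)} \,\le\, \binom{n}{k}\sum_{i=1}^{n-k}\frac{\sqrt{\Var{\Phi_{k+i}}}}{\binom{n}{k+i}},
$$
eliminating the covariance cross-terms and leaving only marginal variances to bound.

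Next, Lemma~\ref{lemma:variance-phi-bound} yields $\Var{\Phi_{k+i}} \le S\,f_{k+i}(n) = S\binom{n}{k+i}\,g_{k+i}(n)$, cancelling one copy of $\binom{n}{k+i}$ under the square root, and Lemma~\ref{lemma:bound} then replaces $g_{k+i}(n)$ by $S\,\bmin^{-n}\,\omax^{k+i}$. Pulling out the common prefactor $S\,\bmin^{-n/2}\,\omax^{k/2}$ and bounding $\binom{n}{k+i} \ge 1$ in the remaining denominators gives
$$
\sqrt{\V(\hat M_k^B)} \,\le\, \binom{n}{k}\cdot S \cdot \bmin^{-n/2}\cdot\omax^{k/2}\sum_{i=1}^{n-k}\omax^{i/2}.
$$

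To recover the polynomial factor $n^{2k+1}$ rather than the weaker $n^{2k+2}$ one would get from a termwise triangle-inequality bound on the residual sum, I would apply Cauchy--Schwarz in the form $\bigl(\sum_i \omax^{i/2}\bigr)^2 \le (n-k)\sum_i \omax^i$. The inner sum is $O(1)$ when $\omax<1$, equals $n-k$ when $\omax=1$, and is $O(\omax^n)$ when $\omax>1$, uniformly giving $O\!\bigl(n\max(1,\omax^n)\bigr)$ after squaring. Squaring the whole inequality, substituting $\binom{n}{k}^2 \le (en/k)^{2k}$, and recognising that $\bmin^{-n}\max(1,\omax^n)$ coincides with $c_2^{-n}$ for $c_2 = \min(\bmin,\bmin/\omax)$, collapses the resulting expression into the target form $c_1\,n^{2k+1}\,c_2^{-n}$.

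The main obstacle is pinning down this polynomial order correctly: the Cauchy--Schwarz step on the residual sum is what saves the factor of $n$ responsible for the exponent $2k+1$, and without it one loses an extra factor of $n$. A secondary bookkeeping concern is verifying, under the regime $\omax > 1$ in which the bound is informative (i.e.\ $\bmin>\omax$, so that $c_2>1$), that the leftover $\omax^{k/2}$ prefactor outside the sum remains absorbable into $c_2^{-n}$ (or into the constant $c_1$) under the assumption $n \ge 2k$, so that the bound depends on $\omax$ only through $c_2$.
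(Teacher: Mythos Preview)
Your approach is essentially the paper's: both routes use the standard-deviation triangle inequality (equivalently $\V(\sum X_i)\le M\sum\V(X_i)$), invoke Lemmas~\ref{lemma:variance-phi-bound} and~\ref{lemma:bound} to bound each marginal variance, control the binomial ratio by $\binom{n}{k}^2\le (en/k)^{2k}$, and then handle the residual geometric sum in $\omax$ to isolate the single extra factor of $n$. The only cosmetic difference is that the paper works at the variance level throughout and bounds $\binom{n}{k}^2/\binom{n}{k+i}\le (en/k)^{2k}$ directly rather than first discarding $\binom{n}{k+i}\ge 1$, and it bounds $\omax^{k+i}\le O_M=\max(\omax^k,\omax^n)$ termwise rather than via your Cauchy--Schwarz step on $\sum_i\omax^{i/2}$; both reductions land on the same final expression.
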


\begin{proof}
  From Lemma~\ref{lemma:variance-phi-bound} in the supplementary, we have $\V(\Phi_i) \leq Sf_i(n) - f_i(n)^2$. Thus,
  \begin{align}
    \V\left(\frac{\Phi_{k + i}}{\binom{n}{k + i}}\right)
                   & \leq \frac{Sf_{k + i}(n) - f_{k + i}(n)^2}{\binom{n}{k + i}^2}
    \leq S\cdot\frac{g_{k + i}(n)}{\binom{n}{k + i}}
    \leq \frac{S^2 \cdot \bmin^{-n}\omax^{k + i}}{\binom{n}{k+1}} \quad \text{(by Lemma~\ref{lemma:bound})}                                                                                                                                                                 \\
    \binom{n}{k}^2 \V\left(\frac{\Phi_{k + i}}{\binom{n}{k + i}}\right)
                   & \leq S \bmin^{-n} \frac{\binom{n}{k}^2}{\binom{n}{k + 1}}\omax^{k + i} \leq S \bmin^{-n} \frac{\left(\frac{e^2n^2}{k^2}\right)^k}{\left(\frac{n}{k + i}\right)^{k}}\omax^{k + i} \leq S \bmin^{-n} \left(\frac{e^2n(k + i)}{k^2}\right)^k\omax^{k + i} \\
                   & \leq S \bmin^{-n} \left(\frac{e^2n^2}{k^2}\right)^k O_M = S \bmin^{-n} \left(\frac{en}{k}\right)^{2k} O_M, \quad \text{where } O_M = \max(\omax^k, \omax^n),                                                                                           \\
    \V(\hat M_k^B) & = \binom{n}{k}^2 \V\left(\sum_{i = 1}^{n - k} (-1)^{i - 1} \frac{\Phi_{k + 1}}{\binom{n}{k + 1}}\right)                                                                                                                                                \\
                   & \leq (n - k) \binom{n}{k}^2 \V\left(\frac{\Phi_{k + 1}}{\binom{n}{k + 1}}\right) \label{align:cs}                                                                                                                                                      \\
                   & \leq S (n - k) \left(\frac{en}{k}\right)^{2k} \cdot \bmin^{-n} O_M = \mathcal{O}(n^{2k + 1}) \cdot \bmin^{-n} O_M,
  \end{align}
  where \eqref{align:cs} follows from Cauchy-Schwarz inequality ($\V(\sum_{j=1}^M X_i) \leq M \cdot \sum_{j=1}^M \V(X_i)$).
  The proof follows from dividing the variance of the estimator into two cases: $\omax < 1$ and $\omax > 1$:
  If $\omax < 1$, $O_M = \omax^k$, and $\V(\hat M_k^B) = \mathcal{O}(n^{2k + 1}\bmin^{-n})$.
  If $\omax > 1$, $O_M = \omax^n$, and, $\V(\hat M_k^B) = \mathcal{O}(n^{2k + 1}\bmin^{-n}\omax^n)$.
\end{proof}

Therefore, the variance exponentially decreases with $n$ if $p_{\max} < 0.5$ or $\frac{1 - p_{\max}}{p_{\max}(1 - p_{\min})} < 1$.

\begin{corollary}
  There exists a constant $c > 1$ such that
  $$
    \mathrm{MSE}(\hat M_k^B) \leq \mathcal{O}(n^{2k + 1} c^{-n}).
  $$
\end{corollary}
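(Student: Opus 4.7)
The plan is to apply the bias--variance decomposition $\mathrm{MSE}(\hat M_k^B) = \bigl(\E{\hat M_k^B - M_k}\bigr)^2 + \Var{\hat M_k^B}$ and to bound each summand using results already established in the excerpt; the two bounds will then combine into the claimed $\mathcal{O}(n^{2k+1} c^{-n})$ form once a single geometric base $c > 1$ is extracted.

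First I would bound the squared bias. Section~\ref{sec:estimator} gives $|\E{\hat M_k^B - M_k}| = R_{n,k} \le n^k \sum_{x\in\mathcal{X}} \mathbf{c}_x^{-n}$ with every $\mathbf{c}_x > 1$; indeed, following the inequality displayed just after Theorem~\ref{thm:expected}, one may take $\mathbf{c}_x = e^{1 - p_x}$. Since $\mathcal{X}$ is finite, setting $c_B = \min_x \mathbf{c}_x = e^{1 - p_{\max}} > 1$ yields $\bigl(\E{\hat M_k^B - M_k}\bigr)^2 \le S^2 n^{2k} c_B^{-2n} = \mathcal{O}(n^{2k}(c_B^2)^{-n})$. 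Next I would invoke the preceding theorem directly, which states $\Var{\hat M_k^B} = \mathcal{O}\bigl(n^{2k+1}\bmin^{-n}\max(1,\omax^n)\bigr)$; the two hypotheses of Theorem~\ref{thm:bt-var} (namely $p_{\max} < 0.5$ or $(1-p_{\max})(1-p_{\min})/p_{\max} < 1$) are precisely what forces the factor $\bmin^{-n}\max(1,\omax^n)$ to decay geometrically, so I can write it as $c_V^{-n}$ for some $c_V > 1$.

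Finally, choosing $c = \min(c_B^2, c_V) > 1$, the squared-bias bound becomes $\mathcal{O}(n^{2k} c^{-n})$ and the variance bound becomes $\mathcal{O}(n^{2k+1} c^{-n})$, so their sum is dominated by $\mathcal{O}(n^{2k+1} c^{-n})$, as required. The only step with any subtlety is extracting the single constant $c > 1$ that works simultaneously for both terms and verifying that the variance's prefactor $n^{2k+1}$ (not $n^{2k}$ from the bias) is the one that governs the MSE; this is bookkeeping that rests entirely on $c_B > 1$ being automatic from $p_{\max} < 1$ and $c_V > 1$ being exactly what the hypotheses of Theorem~\ref{thm:bt-var} supply. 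I do not expect a conceptual obstacle beyond those already resolved by the bias bound in Section~\ref{sec:estimator} and the variance bound in Theorem~\ref{thm:bt-var}.
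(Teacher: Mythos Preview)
Your proposal mirrors the paper's proof exactly: both invoke $\mathrm{MSE}=\mathrm{Bias}^2+\mathrm{Var}$, bound the bias via Eqn.~(\ref{eq:nineteen}) and the variance via the immediately preceding theorem, and then absorb both geometric rates into a single constant $c>1$. If anything you are more explicit than the paper, which simply states ``$\mathrm{MSE}=\V+\mathrm{Bias}^2$ and the bound of the variance and the bias,'' whereas you spell out $c=\min(c_B^2,c_V)$ and correctly note that the hypotheses of Theorem~\ref{thm:bt-var} are what guarantee $c_V>1$.
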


\begin{proof}
  From Equ.~(\ref{eq:nineteen}) in the manuscript, the bias $|\mathbb{E}(\hat M_k^B) - M_k| \leq S \cdot p_{\max} \cdot n^k \cdot p_{\max}^n$.
  The proof follows from the fact that $\mathrm{MSE} = \V + \mathrm{Bias}^2$ and the bound of the variance and the bias.
\end{proof}

\section{Constraints for the Coefficients of the $\E{M_k}$ Representations in the Search Space}
\label{sec:app:constraints}

In this section, we proof that any coefficients $\alpha_{i, j}$ such that $\sum_{i=1}^{n+1}\sum_{j=i}^{n+1} \alpha_{i,j} g_i(j) = \E{M_k} = \frac{k + 1}{n + 1} f_{k + 1}(n  +1) =  \binom{n}{k}g_{k + 1}(n + 1)$ should satisfy the following constraints:
\begin{equation}
  \sum_{i=1}^{n+1}\sum_{j=i}^{n+1} c_{i, j} \alpha_{i,j} = \begin{cases}
    {n \choose k} & \text{if $k' = k + 1$,} \\
    0             & \text{otherwise,}
  \end{cases} \text{ where } c_{i,j} = \begin{cases}
    \binom{n + 1 - j}{k' - i} & \text{if $0 \le k' - i \le n + 1 - j$,} \\
    0                         & \text{otherwise,}
  \end{cases}
  \label{eq:app:constraint}
\end{equation}
and vice versa.

\begin{proof}

  Assume the coefficients $\{\alpha_{i,j}\}_{1 \le i \le j \le n+1}$ of the representation $r$ satisfy $\sum_{i=1}^{n+1}\sum_{j=i}^{n+1} \alpha_{i,j} g_i(j) = \E{M_k}$.
  The proof starts by
  recursively applying the identity $g_i(j) = g_i(j + 1) + g_{i + 1}(j + 1)$ to the linear combination of the coefficients
  from $j = 1$ to $n$ transmiting the coefficients to the downward (to the direction of increasing $n$) until all the coefficients of $g_i(j)$ where $j \le n$ become zero. The resulting coefficients $r' = \{\alpha'_{i,j}\}_{1 \le i \le j \le n+ 1}$ is the following linear combination of the coefficients of $r$:
  \begin{equation}
    \alpha'_{i,j} = 0 \quad \text{if $j \le n$}, \quad \text{otherwise,} \quad\alpha'_{i,n + 1} = \sum_{i' = 1}^{n + 1} \sum_{j' = i'}^{n + 1} c_{i,j} \alpha_{i',j'},
  \end{equation}
  where $c_{i,j}$ is defined as in Equ.~(\ref{eq:app:constraint}). This is because $\alpha_{i',j'}$, the coefficient of $g_{i'}(j')$ in $r$ is transmitted to $\alpha'_{i', n + 1}$, the coefficient of $g_{i'}(n + 1)$ in $r'$ as many times as the number of \emph{propagation paths} from $g_{i'}(j')$ to $g_{i'}(n + 1)$ through the identity $g_i(j) = g_i(j + 1) + g_{i + 1}(j + 1)$. Everytime the coefficient is transmitted, the coefficient moves either to the down ($k$ increases by 1) or to the down and right (both $n$ and $k$ increase by 1) in the 2$\times$2 matrix. Therefore, the number of propagation paths from $g_{i'}(j')$ to $g_{i}(n + 1)$ is equal to $\binom{n + 1 - j'}{i - i'}$, choosing $i - i'$ times to change $k$ from $i'$ to $i$ among $n + 1 - j'$ steps.

  \begin{align*}
    r_0 = \left\{ \alpha_{i,j} = \begin{cases}\binom{n}{k} &\text{for $i=k + 1$ and $j={n+1}$}\\0 &\text{otherwise.}\hfill\end{cases}\right\}
  \end{align*}

  The resulting coefficients of $r'$ from $r$ is in fact the same as the coefficients of the initial representation $r_0$ due to the following reason: the final sum of the coefficients of $r'$ becomes $\sum_{k = 1}^{n + 1} \alpha_{k, n + 1} g_k(n + 1)= \E{M_k} = \binom{n}{k}g_{k + 1}(n + 1)$.
  This is true for any set of probabilities $\langle p_1, \dots, p_S \rangle$ and $n$. Since, $\binom{n + 1}{k + 1} p^{k + 1} (1 - p)^{n - k}$ for $0 \le k \le n$ forms the basis, i.e., the $(n + 2)$ Bernstein basis polynomial of degree $n + 1$, for the vector space of polynomials of degree at most $n + 1$ with real coefficients, the only possible coefficients of the $\E{M_k}$ representations where all the coefficients of $g_i(j)$ where $j \le n$ become zero is the same as the coefficients of the $r_0$ for any set of probabilities $\langle p_1, \dots, p_S \rangle$ and $n$ due to the linear independence of the basis. Therefore, the constraints in Equ.~(\ref{eq:app:constraint}) are necessary for the coefficients of the $\E{M_k}$ representations in the search space.

  Next, we show that any representation $r$ that satisfies the constraints in Equ.~(\ref{eq:app:constraint}) is a valid representation of the $\E{M_k}$. The proof is straightforward by reversing the above process. The sequence of identities from the above process to reach $r_0$ is reversible to reach $r$ from $r_0$. Therefore, the constraints in Equ.~(\ref{eq:app:constraint}) are both necessary and sufficient for the coefficients of the $\E{M_k}$ representations in the search space.
\end{proof}

Following the above proof, any representation of the $\E{M_k}$ that is driven by the four identities,
\begin{align*}
  \alpha_{i,j}\cdot g_i(j) & =\alpha_{i,j}\cdot ((1-\delta) g_i(j) + \delta g_i(j)) \\
                           & =\alpha_{i,j}\cdot (g_{i}(j + 1) + g_{i + 1}(j + 1))   \\
                           & =\alpha_{i,j}\cdot(g_{i}(j - 1) - g_{i + 1}(j))        \\
                           & =\alpha_{i,j}\cdot(g_{i - 1}(j - 1) + g_{i - 1}(j)),
\end{align*}
from the initial representation $r_0$ satisfies the constraints in Equ.~(\ref{eq:app:constraint}) and is a valid representation of the $\E{M_k}$, i.e., $\sum_{i=1}^{n+1}\sum_{j=i}^{n+1} \alpha_{i,j} g_i(j) = \E{M_k}$, and vice versa.

\section{Computing the Variance and the MSE of the Evolved Estimators}
\label{sec:app:variance-evolved}

\iclr{The remaining part to compute the MSE of the evolved estimator $\hat M_k^{\text{Evo}}$ is 1) the variance of the missing mass $M_k$, 2) the variance of the evolved estimator $\hat M_k^{\text{Evo}}$, and 3) the covariance between the evolved estimator $\hat M_k^{\text{Evo}}$ and the missing mass $M_k$.

\subsection{Variance of $M_k$}
\begin{align}
  \Var{M_k}
   & = \Var{\sum_x p_x\indicator(N_x = k)}                                                                             \\
   & = \sum_x p_x^2 \Var{\indicator(N_x = k)} + \sum_{x \neq y} p_x p_y \Cov{\indicator(N_x = k), \indicator(N_y = k)} \\
  \Var{\indicator(N_x = k)}
   & = \E{\indicator(N_x = k)^2} - \E{\indicator(N_x = k)}^2 = \E{\indicator(N_x = k)} - \E{\indicator(N_x = k)}^2     \\
   & = \binom{n}{k} p_x^k (1 - p_x)^{n - k} - \left(\binom{n}{k} p_x^k (1 - p_x)^{n - k}\right)^2
\end{align}
\begin{align*}
   & \Cov{\indicator(N_x = k), \indicator(N_y = k)}                                                                                                           \\
   & = \E{\indicator(N_x = k) \indicator(N_y = k)} - \E{\indicator(N_x = k)} \E{\indicator(N_y = k)}                                                          \\
   & = \begin{cases}
         \frac{n!}{k!^2(n - 2k)!} p_x^k p_y^k (1 - p_x - p_y)^{n - 2k} - \binom{n}{k}^2 p_x^k (1 - p_x)^{n - k} p_y^k (1 - p_y)^{n - k} & \text{if $n \ge 2k$,} \\
         -\binom{n}{k}^2 p_x^k (1 - p_x)^{n - k} p_y^k (1 - p_y)^{n - k}                                                                & \text{otherwise.}
       \end{cases}
\end{align*}
}

\iclr{\subsection{Variance of the evolved estimator}}
Same as $\hat M_k^B$, the evolved estimators from the genetic algorithm are also linear combinations of $\Phi_k(n)$s (while varying both $k$ and $n$ unlike $\hat M_k^B$).
Given the evolved estimator $\hat M_k^{\text{Evo}} = \sum_{i} c_i \Phi_{k_i}(n_i)$, the expected value of $\hat M_k^{\text{Evo}}$ is given by substituting $\Phi_k(n)$ with $f_k(n)$:
\begin{equation}
  \mathbb{E}(\hat M_k^{\text{Evo}}) = \sum_{i} c_i f_{k_i}(n_i).
\end{equation}
Given the multinomial distribution $p$, the covariance between $\Phi_k(n)$ and $\Phi_{k'}(n')$, which is needed to compute the variance of $\hat M_k^{\text{Evo}}$ as Equ.~(\ref{eq:var-span}), can be computed as follows:

\begin{theorem}
  \label{thm:covariance-phi}
  Given the multinomial distribution $p = (p_1, \dots, p_S)$ with support size $S$, let $X^{n_{total}}$ be the set of $n_{total}$ samples from $p$. Let $X^{n}$ and $X^{n'}$ be the first $n$ and $n'$ samples from $X^{n_{total}}$, respectively; WLOG, we assume $1 \leq n' \leq n \leq n_{total}$. Then, the covariance of $\Phi_k(n) = \Phi_k(X^{n})$ and $\Phi_{k'}(n') = \Phi_{k'}(X^{n'})$ ($1 \leq k \leq n$, $1 \leq k' \leq n'$) is given by following:

  \begin{align}
    \Cov{\Phi_k(n), \Phi_{k'}(n')} & = \E{\Phi_k(n) \cdot \Phi_{k'}(n')} - f_k(n) \cdot f_{k'}(n')                                                                   \\
                                   & = \E{\left(\sum_x \indicator(N_x = k)\right) \cdot \left(\sum_{x'} \indicator({N'}_{x'} = k')\right)} - f_k(n) \cdot f_{k'}(n') \\
                                   & = \sum_x \sum_{x'} \E{\indicator(N_x = k \land {N'}_{x'} = k')} - f_k(n) \cdot f_{k'}(n'),
  \end{align}
  where ${N'}_{x'}$ is the number of occurrences of $x'$ in $X^{n'}$.
  Depending on the values of $n, n', k, k', x,$ and $x'$, the $\E{\indicator(N_x = k \land {N'}_{x'} = k')}$ can be computed as Table~\ref{tab:covariance-phi}.
  \begin{table}[ht]
    \centering
    \caption{$\E{\indicator(N_x = k \land {N'}_{x'} = k')}$ for $\Cov{\Phi_k(n), \Phi_{k'}(n')}$\label{tab:covariance-phi}}
    \resizebox{\textwidth}{!}{%
      \begin{tabular}{c|c|c|l} \toprule
        $\forall n, n'$ \emph{s.t.}  & $\forall x, x'$ \emph{s.t.}  & $\forall k, k'$ \emph{s.t.} & $\E{\indicator(N_x = k \land {N'}_{x'} = k')}$                                                                                                                                                                                    \\ \midrule
        \multirow{4}{*}{$n= n'$}     & \multirow{2}{*}{$x = x'$}    & $k = k'$                    & $\binom{n}{k} p_x^k(1 - p_x)^{n - k}$                                                                                                                                                                                             \\ \cmidrule{3-4}
                                     &                              & $k \neq k'$                 & $0$ (\emph{infeasible})                                                                                                                                                                                                           \\ \cmidrule{2-4}
                                     & \multirow{2}{*}{$x \neq x'$} & $k + k' \leq n$             & $ \frac{n!}{k!k'!(n - k-k')!}p_x^k p_{x'}^{k'} (1 - p_x - p_{x'})^{n - k - k'}$                                                                                                                                                   \\ \cmidrule{3-4}
                                     &                              & $k + k' > n$                & $0$ (\emph{infeasible})                                                                                                                                                                                                           \\ \midrule
        \multirow{4}{*}{$n \neq n'$} & \multirow{2}{*}{$x = x'$}    & $k' \leq k$                 & $\binom{n'}{k'} p_x^{k'} (1 - p_x)^{n' - k'} \cdot \binom{n-n'}{k-k'} p_x^{k - k'} (1 - p_x)^{(n - n') -(k - k')}$                                                                                                                \\ \cmidrule{3-4}
                                     &                              & $k' > k$                    & $0$ (\emph{infeasible})                                                                                                                                                                                                           \\ \cmidrule{2-4}
                                     & \multirow{2}{*}{$x \neq x'$} & $k + k' \leq n$             & $\sum_{i = \max(0, k - (n - n'))}^{\min(k, n - k')} \frac{n'!}{k'!i!(n' - k' - i)!} \frac{(n - n')!}{(k - i)!((n - n') - (k- i))!} p_{x'}^{k'} p_{x'}^{k'} (1 - p_x - p_{x'})^{n' - k' - i} p_x^k (1 - p_x)^{(n - n') - (k- i)} $ \\ \cmidrule{3-4}
                                     &                              & $k + k' > n$                & $0$ (\emph{infeasible})                                                                                                                                                                                                           \\ \bottomrule
      \end{tabular}}
  \end{table}
\end{theorem}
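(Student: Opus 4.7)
The chain of equalities shown in the statement is immediate: expanding $\Phi_k(n) = \sum_x \indicator(N_x = k)$ and $\Phi_{k'}(n') = \sum_{x'} \indicator(N'_{x'} = k')$, applying linearity of expectation, and using $\E{\indicator(N_x = k)} = \binom{n}{k} p_x^k (1-p_x)^{n-k}$ to rewrite $f_k(n) f_{k'}(n')$. The substance of the proof is therefore the table: for every pair $(x,x')$ and frequency pair $(k,k')$, compute the joint probability $\E{\indicator(N_x=k\land N'_{x'}=k')} = \Pr[N_x=k\land N'_{x'}=k']$. The plan is to organise the case split along three axes: $n = n'$ vs.\ $n > n'$ (the WLOG assumption covers the other direction by symmetry), $x = x'$ vs.\ $x \ne x'$, and, within each branch, whether the joint event is feasible.

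\textbf{Infeasible cases.} Three entries of the table vanish for counting reasons. If $x = x'$ and $n = n'$ but $k \ne k'$, then $N_x$ cannot simultaneously equal two distinct values. If $x = x'$ and $n > n'$ but $k' > k$, then $N'_x \le N_x$ always, since $N'_x$ counts $x$-occurrences in a sub-sample of the trials counted by $N_x$; so $N'_x = k' > k = N_x$ is impossible. If $x \ne x'$ and $k + k' > n$, the $k$ trial indices on which $x$ appears (among $\{1,\dots,n\}$) and the $k'$ trial indices on which $x'$ appears (among $\{1,\dots,n'\}\subseteq\{1,\dots,n\}$) form disjoint subsets of $\{1,\dots,n\}$, so their total size is at most $n$, contradicting $k + k' > n$.

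\textbf{Easy feasible cases.} The diagonal entry $x=x'$, $n=n'$, $k=k'$ is just the marginal binomial probability. For $x \ne x'$ with $n = n'$ and $k + k' \le n$, the trials partition into three categories $\{x, x', \text{other}\}$ with probabilities $p_x$, $p_{x'}$, $1 - p_x - p_{x'}$, and the joint event is a single trinomial probability. For $x = x'$ with $n > n'$ and $k' \le k$, the first $n'$ trials and the remaining $n - n'$ trials are independent Bernoulli sequences; the joint event $\{N'_x = k', N_x = k\}$ rewrites as $\{N'_x = k'\}\cap\{N_x - N'_x = k - k'\}$, and the two independent binomial factors produce the product displayed in the table.

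\textbf{Main obstacle.} The last case, $x \ne x'$ with $n > n'$ and $k + k' \le n$, is where the real bookkeeping lies. My plan is to condition on $i := N'_x$, the number of $x$-occurrences in the first $n'$ trials. For fixed $i$ the first block is a trinomial over $\{x, x', \text{other}\}$ contributing $\frac{n'!}{k'!\,i!\,(n'-k'-i)!}\, p_x^i p_{x'}^{k'}(1-p_x-p_{x'})^{n'-k'-i}$, while the remaining $n - n'$ trials are independent of the first block and contribute $\binom{n-n'}{k-i} p_x^{k-i}(1-p_x)^{(n-n')-(k-i)}$ for the event $\{N_x - N'_x = k - i\}$; the value of $N_{x'}$ in the second block is unconstrained, so I marginalise it out by using only the partition $\{x, \neg x\}$ there. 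The sum runs over those $i$ for which both multinomial coefficients are well-defined, yielding the bounds $\max(0, k - (n-n')) \le i \le \min(k, n'-k')$ (which the table loosens to $\min(k,n-k')$, since the extra terms are automatically zero). Once all eight entries are established, the covariance formula in the statement follows by subtracting $f_k(n) f_{k'}(n')$, completing the proof.
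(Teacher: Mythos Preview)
Your proposal is correct and follows exactly the approach the paper intends: the paper's own proof consists of the single sentence ``The proof is straightforward from the definition of $N_x$ and ${N'}_{x'}$,'' and your case-by-case unpacking of the binomial/trinomial joint probabilities is precisely that straightforward computation, carried out in full. Your observation that the natural upper summation limit is $\min(k, n'-k')$ rather than the table's $\min(k, n-k')$ (with the extra terms vanishing) is also correct and worth keeping.
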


\begin{proof}
  The proof is straightforward from the definition of $N_x$ and ${N'}_{x'}$.
\end{proof}

\iclr{
\subsection{Covariance between the evolved estimator and the missing mass}

Note that $\Cov{\sum_i c_i X_i, Y} = \sum_i c_i \Cov{X_i, Y}$ for any random variables $X_i$ and $Y$. Therefore, the covariance between the evolved estimator $\hat M_k^{\text{Evo}} = \sum_{i} c_i \Phi_{k_i}(n_i)$ and the missing mass $M_k$ is given by $\Cov{\hat M_k^{\text{Evo}}, M_k} = \sum_{i} c_i \Cov{\Phi_{k_i}(n_i), M_k}$. Then, again, $\Phi_{k_i}(n_i) = \sum_x \indicator(N'_x = k_i)$, where $N'_x$ is the number of occurrences of $x$ in $X^{n_i}$, and $M_k = \sum_y p_y \indicator(N_y = k)$.
Therefore, the covariance can be computed given $\Cov{\indicator(N'_x = k_i), \indicator(N_y = k)}$.
$\Cov{\indicator(N'_x = k_i), \indicator(N_y = k)} = \E{\indicator(N'_x = k_i \land N_y = k)} - \E{\indicator(N'_x = k_i)} \E{\indicator(N_y = k)}$, where the first term is already computed in Table~\ref{tab:covariance-phi}, and the second term is $\binom{n_i}{k_i} p_x^{k_i} (1 - p_x)^{n_i - k_i} \cdot \binom{n}{k} p_y^k (1 - p_y)^{n - k}$.

Having all the necessary components, the MSE of the evolved estimator $\hat M_k^{\text{Evo}}$ naturally follows.
}

\section{Details of the Genetic Algorithm}
\label{sec:app:evolutionary-algorithm}

\renewcommand{\algorithmicrequire}{\textbf{Input:}}
\renewcommand{\algorithmicensure}{\textbf{Output:}}
\begin{algorithm}[h]\small
  \caption{Genetic Algorithm}\label{alg:minimal-mse}
  \begin{algorithmic}[1]
    \Require Target frequency $k$, Sample $X^n$, Iteration limit $G$, mutant size $m$
    \State Population $P_0 = \{r_0\}$
    \State Fitness $f^\text{best} = f_0 = \text{\emph{fitness}}(r_0)$
    \State Limit $G_L=G$
    \For{$g$ from $1$ to $G_L$}
    \State $P = \text{\emph{selectTopM}}(P_{g-1},m)$
    \State $P' = \text{\emph{lapply}}(P, \text{\emph{mutate}})$
    \State $P_g = P' \cup \{r_0\} \cup \text{\emph{selectTopM}}(P_{g-1},3)$
    \State $f_g = \min(\text{\emph{lapply}}(P_g, \text{\emph{fitness}}))$
    \If{$(g = G_L) \wedge ((f_g=f_0)\vee (f^\text{best}> 0.95\cdot f_g))$}
    \State $G_L = G_L + G$
    \State $f^\text{best} = f_g$
    \EndIf
    \EndFor
    \State Estimator $\hat M_k^\text{Evo} = \text{\emph{instantiate}}(\text{\emph{selectTopM}}(P_{G_L},1))$
    \Ensure Minimal-MSE Estimator $\hat M_k^\text{Evo}$
  \end{algorithmic}
\end{algorithm}

Algorithm~\ref{alg:minimal-mse} shows the general procedure of the genetic algorithm (GA) for discovering the estimator $\hat M_k^\text{Evo}$ with minimal MSE for the probability mass $M_k$. Given a target frequency $k$ (incl. $k=0$), the sample $X^n$, an iteration limit $G$, and the number $m$ of candidate representations to be mutated in every iteration, the algorithm produces an estimator $\hat M_k^\text{Evo}$ with minimal MSE.
Starting from the \emph{initial representation} $r_0$ (Eqn.~(\ref{eq:initial}); Line~1), our GA iteratively improves a population of candidate representations $P_g$, called \emph{individuals}.
For every generation $g$ (Line~4), our GA selects the $m$ fittest individuals from the previous generation $P_{g-1}$ (Line~5), mutates them (Line~6), and creates the current generation $P_g$ by adding the initial representation $r_0$ and the Top-3 individuals from the previous generation (Line~7). The initial and previous Top-3 individuals are added to mitigate the risk of convergence to a local optimum.
To \emph{mutate} a representation $r$, our GA (i)~chooses a random term $r$, (ii)~applies the first identity in Sec.~\ref{sec:app:constraints} where $\delta$ is chosen uniformly at random, (iii)~applies one of the rest of the identities,
and (iv)~adjusts the coefficients for the resulting representation $r'$ accordingly.
The iteration limit $G_L$ is increased if the current individuals do \emph{not} improve on the initial individual $r_0$ or \emph{substantially} improve on those discovered recently (Line~9--12).

\subsection{Hyperparameters}

For evaluating Algorithm~\ref{alg:minimal-mse}, we use the following hyperparameters:
\begin{itemize}[leftmargin=*]
  \item Same as the Orlitsky's study~\citep{Orlitsky:2015aa}, which assess the performance of the Good-Turing estimator, we use the hybrid estimator $\hat p$ of the empirical estimate and the Good-Turing estimate to approximate the underlying distribution $\{p_x\}_{x \in \mathcal{X}}$ for estimating the MSE of the evolved estimator. The hybrid estimator $\hat p$ is defined as follows: If $N_x  = k$,
        \begin{equation}
          \hat p_x = \begin{cases}
            c \cdot \frac{k}{N}               & \text{if } k < \Phi_{k + 1}, \\
            c \cdot \frac{\hat M_k^G}{\Phi_k} & \text{otherwise,}
          \end{cases}
        \end{equation}
        where $c$ is a normalization constant such that $\sum_{x \in \mathcal{X}} \hat p_x = 1$.
  \item The number of generations $G = 100$. To avoid the algorithm from converging to a local minimum, we limit the maximum number of generations to be $2000$.
  \item The mutant size $m = 40$.
  \item When selecting the individuals for the mutation, we use \emph{tournament} selection with tournament size $t = 3$, i.e., we randomly choose three individuals with replacement and select the best one, and repeat this process $m$ times.
  \item When choosing the top three individuals when constructing the next generation, we use \emph{elitist} selection, i.e., choosing the top three individuals with the smallest fitness values.
  \item To avoid the estimator from being too complex, we limit the maximum number of terms in the estimator to be $20$.
\end{itemize}

The actual script implementing Algorithm~\ref{alg:minimal-mse} can be found at the publically available repository\newline
\centerline{\url{https://github.com/niMgnoeSeeL/UnseenGA}.}

\section{Additional Experimental Results}
\label{sec:app:additional-results}

\begin{figure}[h]
  \centering
  \includegraphics[width=.49\textwidth]{figures/fig1-label.pdf}
  \includegraphics[width=.49\textwidth]{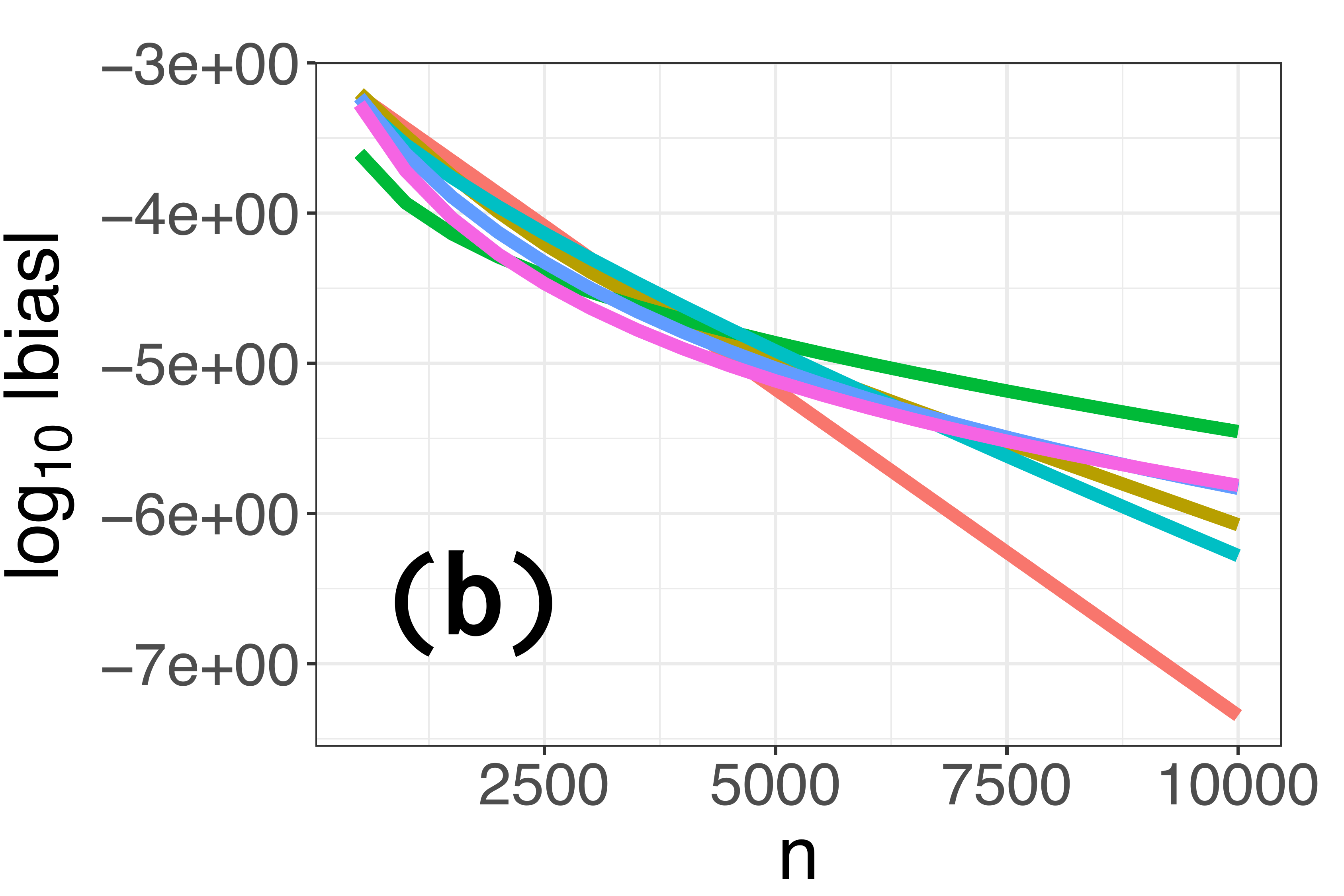}
  \includegraphics[width=.49\textwidth]{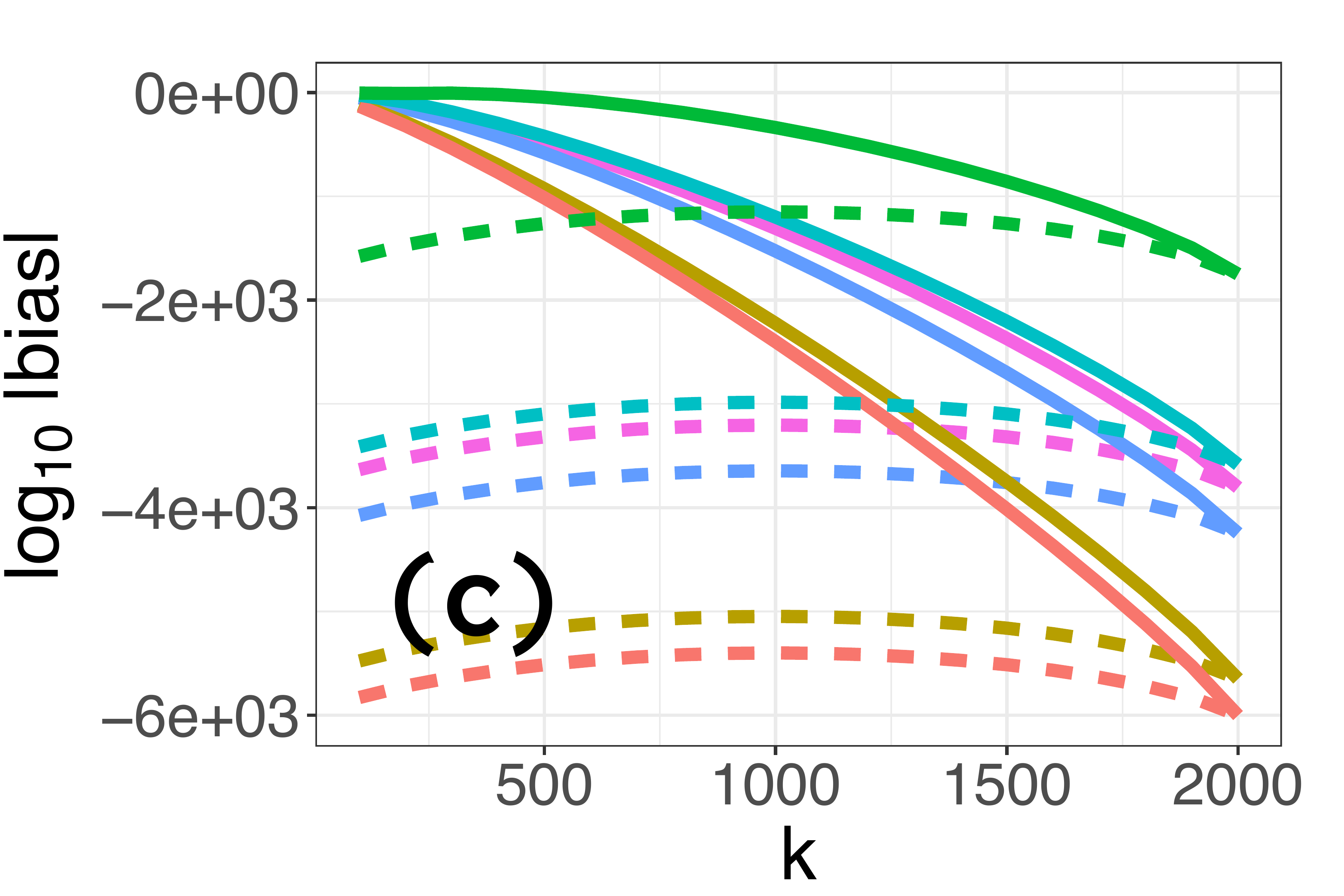}
  \includegraphics[width=.49\textwidth,trim={-60 -60 -60 -60},clip]{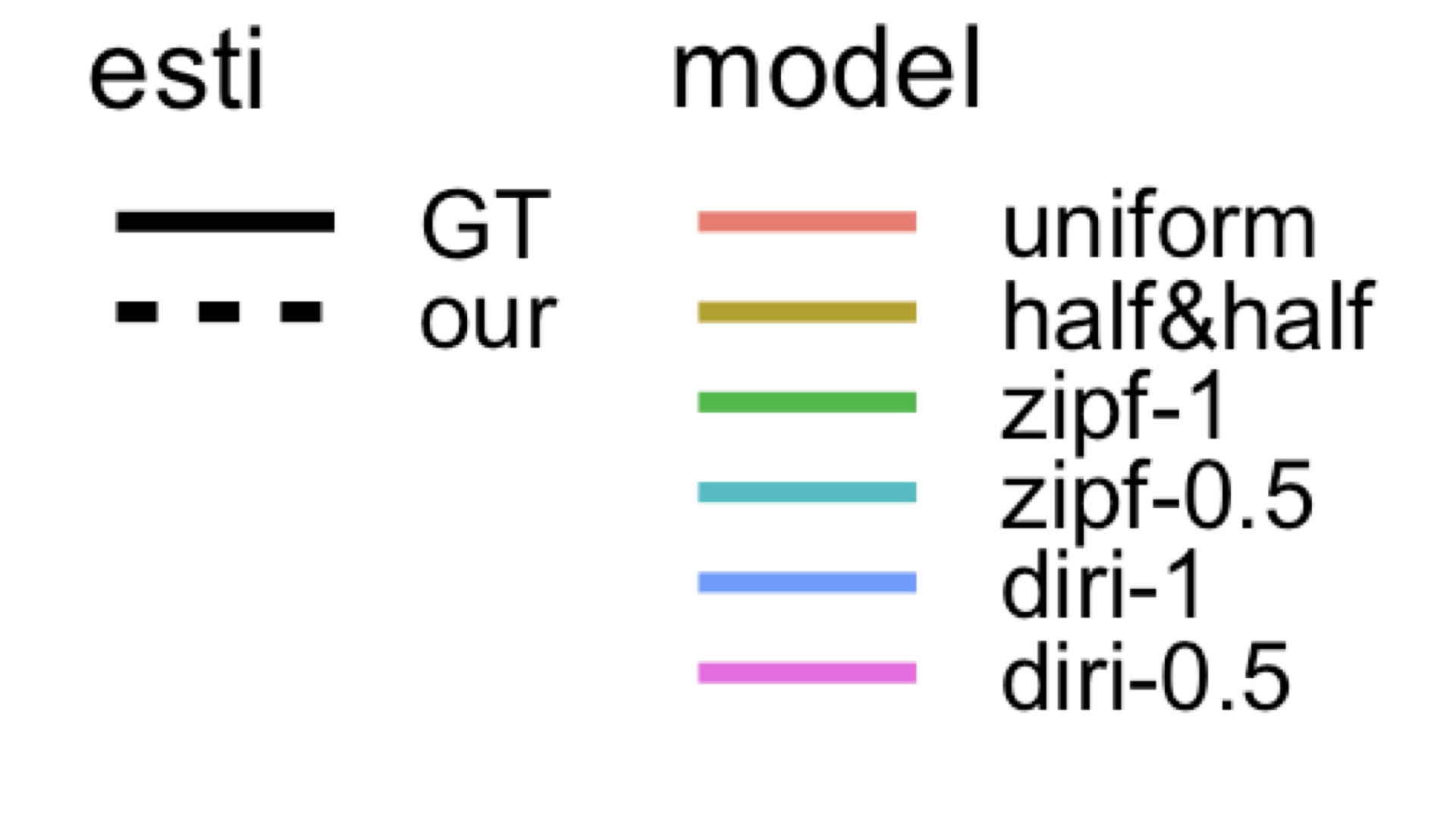}
  \captionof{figure}{Absolute bias of $\hat M_0^B$ and $\hat M_0^G$ (a,b) as a function of $n$ for $k=0$ and (c)~as a function of $k$ for $n=2000$ ($S = 1000$, log-scale).}
  \label{fig:app:missing-mass-figure}
\end{figure}

\autoref{fig:app:missing-mass-figure}(a) also illustrates the \emph{exponential decay} of our estimator in terms of $n$ and how our estimator is less biased than GT by an exponential factor.
In \autoref{fig:app:missing-mass-figure}(b), we can observe that GT's bias also decays exponentially, although not nearly at the rate of our estimator.

In terms of distributions, a closer look at the performance differences confirms our suspicion that the bias of our estimator is strongly influenced by the probability $p_{\max}$ of the most abundant element, while the bias of GT is strongly influenced by the probability $p_{\min}$ of the rarest element.
In fact, by Eqn.~(\ref{eq:nineteen}) the absolute bias of our estimator is minimized when $p_{\max}$ is minimized.
By Eqn.~(\ref{eq:eighteen}), GT's bias is minimized if $p_{\min}$ is maximized.
Since both is true for the \uniform, both estimators exhibit the lowest bias for the \uniform across all six distributions. GT performs similar on all distributions apart from the \uniform (where bias seems minimal) and \zipf (where bias is maximized).
For our estimator, if we ranked the distributions by values of $p_{\max}$ with the smallest value first $\langle \text{\uniform, \half, \zipfhalf, \zipf}\rangle$,
we would arrive at the same ordering in terms of performance of our estimator as shown in \autoref{fig:app:missing-mass-figure}(a).

\newpage

\begin{table}[h!]
  \centering
  \caption{\iclr{The MSE of the Good-Turing estimator $\hat M_0^G$ and the best evolved estimator $\hat M_0^{\text{Evo}}$ and for the missing mass $M_0$, the success rate $\hat A_{12}$ of the evolved estimator ($X_2$) against the Good-Turing estimator ($X_1$), and the ratio (Ratio, $\mathit{MSE}(M_0^{\text{Evo}})/\mathit{MSE}(\hat M_0^G)$) for two support sizes $S = 100$ and $200$, three sample sizes $n$ and six distributions.}}
  \label{tbl:app:mse-table}
  \begin{tabular}{c|c|c|rr|rr}
    \toprule
    $S$                   & $n/S$                & Distribution & $\mathit{MSE}(\hat M_0^G)$ & $\mathit{MSE}(\hat M_0^{\text{evo}})$ & $\hat A_{12}$ & Ratio \\ \midrule
    \multirow{18}{*}{100} & \multirow{6}{*}{0.5} & \uniform     & 1.09e-02                   & 7.94e-03                              & 0.88          & 72\%  \\
                          &                      & \half        & 1.14e-02                   & 7.16e-03                              & 0.90          & 63\%  \\
                          &                      & \zipfhalf    & 8.09e-03                   & 7.37e-03                              & 0.87          & 91\%  \\
                          &                      & \zipf        & 1.08e-02                   & 8.13e-03                              & 0.91          & 75\%  \\
                          &                      & \diri        & 1.10e-02                   & 7.97e-03                              & 0.92          & 72\%  \\
                          &                      & \dirihalf    & 9.90e-03                   & 8.02e-03                              & 0.87          & 81\%  \\ \cmidrule{2-7}
                          & \multirow{6}{*}{1.0} & \uniform     & 6.05e-03                   & 4.29e-03                              & 0.97          & 70\%  \\
                          &                      & \half        & 5.46e-03                   & 4.07e-03                              & 0.98          & 74\%  \\
                          &                      & \zipfhalf    & 3.42e-03                   & 3.04e-03                              & 0.89          & 88\%  \\
                          &                      & \zipf        & 5.23e-03                   & 4.16e-03                              & 0.96          & 79\%  \\
                          &                      & \diri        & 4.36e-03                   & 3.47e-03                              & 0.92          & 79\%  \\
                          &                      & \dirihalf    & 3.47e-03                   & 2.86e-03                              & 0.88          & 82\%  \\ \cmidrule{2-7}
                          & \multirow{6}{*}{2.0} & \uniform     & 1.93e-03                   & 1.73e-03                              & 0.96          & 89\%  \\
                          &                      & \half        & 1.57e-03                   & 1.42e-03                              & 0.93          & 90\%  \\
                          &                      & \zipfhalf    & 1.26e-03                   & 1.08e-03                              & 0.94          & 85\%  \\
                          &                      & \zipf        & 1.73e-03                   & 1.54e-03                              & 0.97          & 88\%  \\
                          &                      & \diri        & 1.23e-03                   & 1.05e-03                              & 0.91          & 85\%  \\
                          &                      & \dirihalf    & 9.41e-04                   & 8.08e-04                              & 0.86          & 85\%  \\ \midrule
    \multirow{18}{*}{200} & \multirow{6}{*}{0.5} & \uniform     & 5.44e-03                   & 4.23e-03                              & 0.90          & 77\%  \\
                          &                      & \half        & 5.65e-03                   & 4.28e-03                              & 0.92          & 75\%  \\
                          &                      & \zipfhalf    & 3.79e-03                   & 3.64e-03                              & 0.77          & 96\%  \\
                          &                      & \zipf        & 5.29e-03                   & 3.98e-03                              & 0.95          & 75\%  \\
                          &                      & \diri        & 5.45e-03                   & 4.01e-03                              & 0.92          & 73\%  \\
                          &                      & \dirihalf    & 4.95e-03                   & 4.05e-03                              & 0.86          & 82\%  \\ \cmidrule{2-7}
                          & \multirow{6}{*}{1.0} & \uniform     & 3.01e-03                   & 2.46e-03                              & 0.98          & 81\%  \\
                          &                      & \half        & 2.73e-03                   & 2.26e-03                              & 0.96          & 82\%  \\
                          &                      & \zipfhalf    & 1.61e-03                   & 1.47e-03                              & 0.88          & 91\%  \\
                          &                      & \zipf        & 2.58e-03                   & 2.21e-03                              & 0.96          & 85\%  \\
                          &                      & \diri        & 2.18e-03                   & 1.92e-03                              & 0.84          & 87\%  \\
                          &                      & \dirihalf    & 1.74e-03                   & 1.56e-03                              & 0.79          & 89\%  \\ \cmidrule{2-7}
                          & \multirow{6}{*}{2.0} & \uniform     & 9.67e-04                   & 9.21e-04                              & 0.99          & 95\%  \\
                          &                      & \half        & 7.88e-04                   & 7.37e-04                              & 0.98          & 93\%  \\
                          &                      & \zipfhalf    & 6.01e-04                   & 5.66e-04                              & 0.91          & 94\%  \\
                          &                      & \zipf        & 8.66e-04                   & 7.99e-04                              & 1.00          & 92\%  \\
                          &                      & \diri        & 6.15e-04                   & 5.56e-04                              & 0.86          & 90\%  \\
                          &                      & \dirihalf    & 4.71e-04                   & 4.25e-04                              & 0.80          & 90\%  \\ \bottomrule
  \end{tabular}
\end{table}

\autoref{tbl:app:mse-table} shows the MSE of the Good-Turing estimator $\hat M_0^G$ and the best evolved estimator $\hat M_0^{\text{Evo}}$ for the missing mass $M_0$, the success rate $\hat A_{12}$ of the evolved estimator ($X_2$) against the Good-Turing estimator ($X_1$), and the ratio (Ratio, $\mathit{MSE}(M_0^{\text{Evo}})/\mathit{MSE}(\hat M_0^G)$) for two support sizes $S = 100$ and $200$, three sample sizes $n$ and six distributions.

\begin{table}[h]
  \centering
  \caption{Variance of the evolved estimator $\hat M_0^{\text{Evo}}$. \label{tbl:app:variance-table}}
  \begin{tabular}{c|ccc|ccc}
    \toprule
    \multirow{2}{*}{\shortstack{Dataset                                     \\Index}} & \multicolumn{3}{c|}{Uniform (MSE of Good-Turing = 6.0e-3)} & \multicolumn{3}{c}{Zipf (MSE of Good-Turing = 3.4e-3)} \\
    \cmidrule(lr){2-4} \cmidrule(lr){5-7}
      & Mean MSE & Median MSE & Variance & Mean MSE & Median MSE & Variance \\
    \midrule
    1 & 3.8e-03  & 3.9e-03    & 2.9e-06  & 2.7e-03  & 2.4e-03    & 1.6e-07  \\
    2 & 5.1e-03  & 5.3e-03    & 5.4e-07  & 2.5e-03  & 2.4e-03    & 7.9e-08  \\
    3 & 5.6e-03  & 5.3e-03    & 1.6e-06  & 2.8e-03  & 2.9e-03    & 2.0e-07  \\
    4 & 3.8e-03  & 3.4e-03    & 2.0e-06  & 3.4e-03  & 3.2e-03    & 2.6e-07  \\
    5 & 6.5e-03  & 5.5e-03    & 4.6e-06  & 2.7e-03  & 2.6e-03    & 8.4e-08  \\
    \bottomrule
  \end{tabular}
\end{table}

We evaluated the variance of the MSE of the evolved estimator $\hat M_0^{\text{Evo}}$ from GA due to its randomness. Using $(S,n)=(100,100)$, we tested two distributions (\uniform and \zipf) on five sample datasets, running the GA 20 times per dataset.
\autoref{tbl:app:variance-table} shows the mean, median, and variance of the MSE for the evolved estimator as well as the MSE for the Good-Turing estimator for comparison (in parentheses). It shows that the variance of the evolved estimators' MSE is small, indicating the GA's stability and the robustness of the evolved estimator, which consistently outperforms the Good-Turing estimator.

\begin{table}[h]
  \centering
  \caption{MSE comparison for the probabiltiy mass estimation, i.e., $k > 0$ (Distribution: \uniform, $S=100$, $n=100$). \label{tbl:app:mse-k-table}}
  \begin{tabular}{c|cc}
    \toprule
    $k$ & $\mathit{MSE}(\hat M_k^G)$ & $\mathit{MSE}(\hat M_k^{\text{Evo}})$ \\
    \midrule
    1   & 2.3e-03                    & 1.1e-03                               \\
    2   & 1.9e-03                    & 5.7e-04                               \\
    3   & 1.0e-03                    & 2.6e-04                               \\
    4   & 3.5e-04                    & 1.7e-04                               \\
    \bottomrule
  \end{tabular}
\end{table}

We evaluated the probability mass estimation for $0<k\le 4$ for \uniform distribution with $S=200,n=200$. The result shown below indicates that our method consistently outperforms the Good-Turing estimator across $k$ values. We focus on small $k$, as practical interest often lies in unseen and rare categories where empirical probability estimates are biased.

\iclr{
\section{Recent Related Work on Estimating Missing Mass}
\label{sec:app:related-work}

In this section, we provide a brief overview of the recent related work on estimating the properties of the underlying distribution from a sample~\citep{painskyGeneralizedGoodTuringImproves2023,valiantEstimatingUnseenImproved2017,wuCHEBYSHEVPOLYNOMIALSMOMENT2019}; those works can either directly or indirectly be used to estimate the missing mass $M_0$. We first describe each method and how it can be used to estimate the missing mass $M_0$. Then, we conduct experiments comparing the performance of the methods against our estimator $\hat M_0^B$ in terms of the mean squared error (MSE).

While our objective is to estimate the missing mass $M_0$, \citet{wuCHEBYSHEVPOLYNOMIALSMOMENT2019} focus on estimating the support size $S$ of a multinomial distribution $p$ given a sample $X^n$ of size $n$. However, since there can be arbitrarily many unseen classes in the missing mass, they restrict their analysis to distributions whose $\min(p)\ge 1/k$ for a given constant $k$. This implies, the estimand $S$ is assumed to be upper-bounded by $k$. In their experiments, they compare their estimator $\hat S^W$ against the estimator $\hat S^G=S(n)/(1-\hat M^G_0)$ where $S(n)$ is the number of observed classes and $\hat M^G_0$ is the Good-Turing estimator of the missing mass. Integrating both equations, we can construct an estimator of the missing mass $\hat M^W$ from their estimate $\hat S^W$ of the support size as $\hat M^W=1-S(n)/\hat S^W$ for comparison with our estimator of the missing mass.\footnote{However, we note that $S(n)/S\neq M_0$ for all distributions, except the uniform. Consider a distribution $p=<0.9,0.1>$ and assume the first sample returns the first class. Then, $S(n)/S=0.5$ while $M_0=0.1$.}

\citet{valiantEstimatingUnseenImproved2017} propose to estimate a ``plausible'' histogram from the frequencies of frequencies (FoF) and, from that, to compute functionals such as entropy or the support size. These metrics are more complex, higher-level metrics than the raw probability mass/missing mass, and the plausible histogram computed from Algorithm 1 in this paper cannot be used directly to estimate the missing mass. Nevertheless, for the purpose of comparison, we can consider the 1 - $\sum_i x_i$ used in their algorithm (Algorithm 1 in \citet{valiantEstimatingUnseenImproved2017}) as the missing mass; $x_i$ is the fine mesh of values that discretely approximate the potential support of the histogram. However, since they are not originally intended to represent the probability mass of the unseen samples, we expect the missing mass estimation to be not as accurate as our method.

\citet{painskyGeneralizedGoodTuringImproves2023} considers the problem of finding coefficients $\beta_1,\cdots,\beta_n$ in the missing mass estimator $\hat M_0^{\beta}=\sum_{i=1}^n\beta_i\Phi_i$ such that the worst-case $l_2^2$ risk over all possible distributions $\mathcal{P}$ is minimized. Using an upper bound on the estimation risk for a given distribution, they define an algorithm that solves a constrained programming problem to find the first two coefficients $\beta_1$ and $\beta_2$ and thus generate an estimator $\hat M_0^{\beta_1,\beta_2}$ of the missing mass with minimized risk across all distributions.

Our work makes several contributions over \citet{painskyGeneralizedGoodTuringImproves2023}.
\begin{itemize}
  \item We generalize the estimation problem beyond the missing mass $M_0$ to the estimation for the total probability mass $M_k$ across all elements that appear $k$ times in the sample for any value of $k:0\le k\le n$.
  \item Our (distribution-free) method searches for the (distribution-specific) estimator, which minimizes the risk for that particular (unknown) distribution given only the sample. We propose a genetic algorithm, define a search space of valid representations of $\mathbb{E}[M_k]$ which is searched, and define a method to instantiate a representation into an estimator that is being evaluated for fitness.
  \item Our set of estimators is parametric not only in $\Phi_i=\Phi_i(n)$ but also the FoF of subsequences $X^j=\langle X_1,...,X_j\rangle$ of $X^n$, i.e., we consider $\sum_{i=1}^n\sum_{j=1}^n\beta_{i,j}\Phi_i(j)$ (c.f the dependency between FoF in Figure~\ref{fig:relationship}).
\end{itemize}

\begin{table}
  \centering
  \caption{\iclr{The MSE of the missing mass estimators for two support sizes $S = 100$ and $200$, three sample sizes $n$ and six distributions. The bold values indicate the best performing estimator.\label{app:table:related-work}}}
  \begin{tabular}{r|r|c|rrrr}
    \toprule
    S                     & n                    & dist      & Chebyshev         & Valiant  & Painsky           & Ours              \\
    \midrule
    \multirow{18}{*}{100} & \multirow{6}{*}{50}  & \diri     & 3.65e+01          & 2.27e-01 & 9.82e-03          & \textbf{7.97e-03} \\
                          &                      & \dirihalf & 1.05e+03          & 2.34e-01 & \textbf{7.46e-03} & 8.02e-03          \\
                          &                      & \half     & \textbf{4.67e-03} & 1.43e-01 & 1.01e-02          & 7.16e-03          \\
                          &                      & \uniform  & 9.40e-03          & 1.01e-01 & 7.97e-03          & \textbf{7.94e-03} \\
                          &                      & \zipf     & 4.36e-02          & 1.33e-01 & 7.90e-03          & \textbf{7.37e-03} \\
                          &                      & \zipfhalf & \textbf{4.73e-03} & 1.43e-01 & 8.51e-03          & 8.13e-03          \\ \cmidrule{2-7}
                          & \multirow{6}{*}{100} & \diri     & 7.68e-02          & 4.28e-01 & \textbf{3.45e-03} & 3.47e-03          \\
                          &                      & \dirihalf & 6.02e+00          & 3.84e-01 & \textbf{2.63e-03} & 2.86e-03          \\
                          &                      & \half     & 8.41e-03          & 4.05e-01 & 5.49e-03          & \textbf{4.07e-03} \\
                          &                      & \uniform  & 5.08e-03          & 3.49e-01 & 7.11e-03          & \textbf{4.29e-03} \\
                          &                      & \zipf     & 5.87e-02          & 1.64e-01 & 3.40e-03          & \textbf{3.04e-03} \\
                          &                      & \zipfhalf & 8.82e-03          & 3.47e-01 & \textbf{3.94e-03} & 4.16e-03          \\\cmidrule{2-7}
                          & \multirow{6}{*}{200} & \diri     & 8.70e-02          & 6.05e-01 & 1.34e-03          & \textbf{1.05e-03} \\
                          &                      & \dirihalf & 1.29e+01          & 5.22e-01 & 1.01e-03          & \textbf{8.08e-04} \\
                          &                      & \half     & 6.70e-03          & 6.63e-01 & 1.73e-03          & \textbf{1.42e-03} \\
                          &                      & \uniform  & 3.89e-03          & 6.78e-01 & 2.27e-03          & \textbf{1.73e-03} \\
                          &                      & \zipf     & 4.98e-02          & 1.91e-01 & 1.59e-03          & \textbf{1.08e-03} \\
                          &                      & \zipfhalf & 6.24e-03          & 5.60e-01 & 1.73e-03          & \textbf{1.53e-03} \\ \midrule
    \multirow{18}{*}{200} & \multirow{6}{*}{100} & \diri     & 5.12e-02          & 2.44e-01 & 4.88e-03          & \textbf{4.01e-03} \\
                          &                      & \dirihalf & 3.03e+00          & 3.53e-01 & 5.02e-03          & \textbf{4.05e-03} \\
                          &                      & \half     & \textbf{1.88e-03} & 1.73e-01 & 6.16e-03          & 4.28e-03          \\
                          &                      & \uniform  & 9.78e-03          & 1.29e-01 & 6.54e-03          & \textbf{4.23e-03} \\
                          &                      & \zipf     & 1.07e-01          & 1.43e-01 & \textbf{2.93e-03} & 3.64e-03          \\
                          &                      & \zipfhalf & \textbf{2.45e-03} & 1.75e-01 & 4.42e-03          & 3.98e-03          \\\cmidrule{2-7}
                          & \multirow{6}{*}{200} & \diri     & 1.15e-01          & 4.88e-01 & 2.01e-03          & \textbf{1.92e-03} \\
                          &                      & \dirihalf & 2.28e+00          & 5.08e-01 & 1.75e-03          & \textbf{1.56e-03} \\
                          &                      & \half     & 5.10e-03          & 4.23e-01 & 2.42e-03          & \textbf{2.26e-03} \\
                          &                      & \uniform  & 3.12e-03          & 3.67e-01 & 2.58e-03          & \textbf{2.46e-03} \\
                          &                      & \zipf     & 1.16e-01          & 1.76e-01 & \textbf{1.17e-03} & 1.47e-03          \\
                          &                      & \zipfhalf & 5.84e-03          & 3.73e-01 & \textbf{2.12e-03} & 2.21e-03          \\\cmidrule{2-7}
                          & \multirow{6}{*}{400} & \diri     & 5.44e-01          & 6.56e-01 & \textbf{5.00e-04} & 5.56e-04          \\
                          &                      & \dirihalf & 1.94e+00          & 6.69e-01 & 4.68e-04          & \textbf{4.25e-04} \\
                          &                      & \half     & 4.77e-03          & 7.07e-01 & 7.62e-04          & \textbf{7.38e-04} \\
                          &                      & \uniform  & 1.48e-03          & 7.07e-01 & \textbf{8.63e-04} & 9.21e-04          \\
                          &                      & \zipf     & 8.45e-02          & 2.08e-01 & \textbf{5.06e-04} & 5.66e-04          \\
                          &                      & \zipfhalf & 4.94e-03          & 6.00e-01 & 1.03e-03          & \textbf{7.99e-04} \\
    \bottomrule
  \end{tabular}
\end{table}

We conducted 100 repetitions of the experiment with the same setting as in the main paper can calculated the MSE of each method. In particular for \citet{wuCHEBYSHEVPOLYNOMIALSMOMENT2019}' method, although $\min(p)$ is not normally known, we set $1/k=\min(p)$ for our experiment. Table~\ref{app:table:related-work} shows the MSE of each method for two support sizes $S = 100$ and $200$, three sample sizes $n$ and six distributions. The bold values indicate the best performing estimator. Overall, our search-based method outperforms the other three methods in terms of mean squared error (MSE). This is because in contrast to ours, \citet{wuCHEBYSHEVPOLYNOMIALSMOMENT2019} and \citet{valiantEstimatingUnseenImproved2017} are not directly designed to estimate the missing mass, and while all are distribution-free estimation methodologies, only ours generates an estimator with a reduced MSE on that \emph{specific} distribution.

\begin{itemize}
  \item We found that the MSE of \citet{wuCHEBYSHEVPOLYNOMIALSMOMENT2019} is roughly 6x higher than that of our method median-wise. Yet, their method produces significantly inaccurate estimations for some cases; the MSE of their method in those cases could be up to six orders of magnitude higher than that of our method. We use the implementation provided by the authors.
  \item Same as what we expect, we find \citet{valiantEstimatingUnseenImproved2017}'s MSE is orders of magnitude higher than that of our method (median: 128x higher). We use the code from the paper's appendix and convert it from MATLAB code to Python code.
  \item Our empirical results confirm that our method outperforms \citet{painskyGeneralizedGoodTuringImproves2023}'s in terms of MSE; it shows that Painsky's method has a higher MSE than that of our method (median: 10\%, mean: 14\%); although Painsky's method performs better than the Good-Turing estimator (where our method outperforms Good-Turing by 25\%) as they intended, our method still outperforms it.
\end{itemize}

}

\end{document}